\newcommand{\qx}[1]{\textcolor{cyan}{(QX: #1)}}
\newcommand{\bitem}{\begin{itemize}}
\newcommand{\eitem}{\end{itemize}}
\newcommand{\benum}{\begin{enumerate}}
\newcommand{\eenum}{\end{enumerate}}
\newcommand{\bdefn}{\begin{definition}}
\newcommand{\edefn}{\end{definition}}
\newcommand{\bprop}{\begin{proposition}}
\newcommand{\eprop}{\end{proposition}}
\newcommand{\bque}{\begin{question}}
\newcommand{\eque}{\end{question}}
\newcommand{\bobsv}{\begin{observation}}
\newcommand{\eobsv}{\end{observation}}
\newcommand{\beqn}{\begin{equation}\begin{aligned}}
\newcommand{\eeqn}{\end{aligned}\end{equation}}
\newcommand{\ps}{\begin{proof}[Sketch]}
\newcommand{\brmk}{\begin{remark}}
\newcommand{\ermk}{\end{remark}}
\newcommand{\bduiqi}{\begin{aligned}}
\newcommand{\eduiqi}{\end{aligned}}
\newcommand{\bcoro}{\begin{corollary}}
\newcommand{\ecoro}{\end{corollary}}
\newcommand{\bcom}{}
\newcommand{\avg}{average}
\newcommand{\arb}{arbitrary}
\newcommand{\alg}{algorithm}
\newcommand{\Alg}{Algorithm}
\newcommand{\assu}{assumption}
\newcommand{\bs}{\backslash}
\newcommand{\cond}{condition}
\newcommand{\corres}{corresponding}
\newcommand{\conti}{continuous}
\newcommand{\ci}{confidence interval}
\newcommand{\distr}{distribution}
\newcommand{\func}{function}
\newcommand{\ho}{\mathbb}
\newcommand{\indep}{independent}
\newcommand{\IOW}{In other words}
\newcommand{\ih}{induction hypothesis}
\newcommand{\ins}{instance}
\newcommand{\info}{information}
\newcommand{\ineq}{inequality}
\newcommand{\IFT}{It follows that}
\newcommand{\ift}{it follows that}
\newcommand{\kehua}{characterize}
\newcommand{\lar}{\leftarrow}
\newcommand{\mtg}{martingale}
\newcommand{\Ow}{Otherwise}
\newcommand{\ow}{otherwise}
\newcommand{\omg}{\omega}
\newcommand{\Omg}{\Omega}
\newcommand{\OTOH}{On the other hand}
\newcommand{\rb}{\right}
\newcommand{\lb}{\left}
\newcommand{\prb}{probability}
\newcommand{\parti}{particular}
\newcommand{\pmt}{parameter}
\newcommand{\rv}{random variable}
\newcommand{\rar}{\rightarrow}
\newcommand{\real}{\mathbb{R}}
\newcommand{\resp}{respectively}
\newcommand{\sats}{satisfies}
\newcommand{\sse}{\subseteq}
\newcommand{\sps}{suppose}
\newcommand{\Sps}{Suppose}
\newcommand{\strfwd}{straightforward}
\newcommand{\sym}{symmetry}
\newcommand{\xulie}{sequence}
\let\eps\varepsilon
\theoremstyle{plain}
\newtheorem{theorem}{Theorem}[section]
\newtheorem{proposition}[theorem]{Proposition}
\newtheorem{lemma}[theorem]{Lemma}
\newtheorem{corollary}[theorem]{Corollary}
\theoremstyle{definition}
\newtheorem{definition}[theorem]{Definition}
\theoremstyle{remark}
\newtheorem{remark}[theorem]{Remark}
\title{\LARGE \bf Smooth Non-Stationary Bandits}
\author{Su Jia, Qian Xie, Nathan Kallus, Peter I. Frazier}
\affil{Cornell University}
\begin{document}
\maketitle

\begin{abstract}
In many applications of online decision making, the environment is non-stationary and it is therefore crucial to use bandit algorithms that handle changes. 
Most existing approaches are designed to protect against non-smooth changes, constrained only by total variation or Lipschitzness over time. 
However, in practice, environments often change {\em smoothly}, so such algorithms may incur higher-than-necessary regret.
We study a non-stationary bandits problem where each arm's mean reward sequence can be embedded into a $\beta$-H\"older function, i.e., a function that is $(\beta-1)$-times Lipschitz-continuously differentiable.
The non-stationarity becomes more smooth as $\beta$ increases. 
When $\beta=1$, this corresponds to the non-smooth regime, where \cite{besbes2014stochastic} established a minimax regret of $\tilde \Theta(T^{2/3})$.
We show the first separation between the smooth (i.e., $\beta\ge 2$) and non-smooth (i.e., $\beta=1$) regimes by presenting a policy with $\tilde O(k^{4/5} T^{3/5})$ regret on any $k$-armed, $2$-H\"older instance.
We complement this result by showing that the minimax regret on the $\beta$-H\"older family of instances is
$\Omg(T^{(\beta+1)/(2\beta+1)})$ for any integer $\beta\ge 1$.
This matches our upper bound for $\beta=2$ up to logarithmic factors.
Furthermore, we validated the effectiveness of our policy through a comprehensive numerical study using real-world click-through rate data.
\end{abstract}

\noindent\textbf{Keywords:}
multi-armed bandits; H\"older Class; non-stationarity; online learning; smoothness

\section{Introduction}\label{sec:intro}
In numerous decision-making problems characterized by uncertainty and limited feedback on the outcomes, an agent must judiciously leverage past observations to make informed future decisions.

The {\em Multi-armed Bandits} (MAB) framework provides a paradigm for addressing this issue, as it encapsulates the fundamental trade-off between (i) exploration, i.e., acquiring new information, and (ii) exploitation, i.e., generating rewards based on available information.

Further complicating this challenge is the temporal evolution of the underlying environment, a characteristic prevalent in many real-world problems in operations and marketing.
This motivates the study of the non-stationary bandits problem. 
In the standard model, the adversary is limited by how much the mean reward is allowed to change {\em in total} over time \citep{besbes2014stochastic}.
Formally, the {\it total variation} is defined as $\sum_{t=1}^T |r_a(t) - r_a(t+1)|$, where $r_a(t)$ is the mean reward of arm $a$ in round $t$, and $T$ is the total number of rounds.

The problem within this framework has been extensively studied. \cite{besbes2014stochastic} introduced a policy that periodically ``restarts'' the learning process by discarding all prior observations.
With total variation $V$, this policy achieves $\tilde O\lb (V^{1/3} T^{2/3}\rb)$ regret (i.e., the loss relative to the optimal policy if the mean rewards were known, as formally defined in Section \ref{sec:formulation}); see their Theorem 2.
Moreover, this result is best possible.  
For any $V$, any policy suffers $\Omg\lb (V^{1/3} T^{2/3}\rb)$ regret on some instance with total variation $V$; see their Theorem 1.

However, the total variation model may be too pessimistic, as it allows the adversary to shock the mean reward {\bf instantaneously}. 
Can we achieve lower regret by requiring the mean rewards to change {\bf continuously}? 
More precisely, suppose that the sequence  $\{r_a(t)\}_{t\in [T]}$ of mean rewards  can be {\em embedded} into an $L$-Lipschitz function, in the sense that there exists an $L$-Lipschitz \func\ $f_a : [0,1]\rar \real$ such that for all $t\in [T]$ we have
\begin{align}\label{eqn:embed}
r_a(t) = f_a\lb(\frac tT\rb).
\end{align}
Note that by Lipschitzness of $f_a$, we have
$V\le L$, so we immediately obtain an $\tilde O(L^{1/3} T^{2/3})$ regret bound. 
The above question can then be formalized as: 
\begin{center}
Q1: {\em Can we improve this $\tilde O(T^{2/3})$ bound by capitalizing on Lipschitz continuity?}
\end{center} 

The answer is negative. In fact, although \cite{besbes2014stochastic} proved the $\Omg\lb(V^{1/3} T^{2/3}\rb)$ lower bound using {\em discontinuous} reward functions, an alternate proof can be given using Lipschitz continuous reward functions; see our Theorem \ref{thm:lb}.
In other words, the impact of continuity is negligible compared to the constraints imposed by the total variation budget.

But there is more to the story. The Lipschitz non-stationarity model may still be overly pessimistic, since it allows the adversary to instantly shock the {\bf rate of change} in the mean reward (formally, the derivative of $f_a$ in eqn. \eqref{eqn:embed}).
However, in many applications, the underlying environment changes in a {\em smooth} manner, such as gradual changes in temperature, economic factors, and demand for seasonal products, just to name a few.
Furthermore, the capacity to abruptly change slope plays a pivotal role in the lower-bound proof under the Lipschitz model. 
This suggests that the optimal regret bound may be improved with a suitable smoothness assumption.

To formalize the notion of smooth non-stationarity, we borrow a standard concept, the {\em H\"older class}, from nonparametric statistics.
For any integer $\beta \ge 1$, a \func\ is  {\it $\beta$-H\"older} if the first $(\beta-1)$ derivatives exist and are Lipschitz (the formal definition is deferred to Section \ref{sec:formulation}). 
An instance is $\beta$-H\"older if the reward function $r_a(\cdot)$ of each arm can be embedded into a $\beta$-H\"older function $f_a(\cdot)$ on $[0,1]$, as formally specified by eqn. \eqref{eqn:embed}.

In \parti, for $\beta=1$, this recovers the Lipschitz model discussed above.
Setting $\beta \ge 2$, we constrain the adversary more than in the previous literature, since the sequence of mean rewards can be embedded into a {\bf differentiable} function.
This motivates the central question of this work: 
\begin{center} Q2: \it Can we break the minimax regret bound, $\tilde \Theta(T^{2/3})$, for the non-smooth setting (i.e., $\beta=1$), under smooth non-stationarity (i.e., $\beta \ge 2$)?
\end{center}

We provide an affirmative answer by showing an $ \tilde O(T^{3/5})$ upper bound on the minimax regret for $\beta=2$.
Moreover, this bound is {\bf nearly optimal}: We show that for any integer $\beta\ge 1$, the minimax regret is $\Omg(T^{(\beta+1)/(2\beta+1)})$ for $\beta$-H\"older instances. 
In \parti, for $\beta=2$, our upper and lower bounds match up to logarithmic factors.

\subsection{Our Contributions} 
Our contributions can be categorized into the following three parts. 

\benum 
\item {\bf A novel formulation.} Our first contribution is to introduce {\em smoothness} in non-stationary online models.
We quantify the smoothness of an arrival sequence using a standard concept, {\em H\"older smoothness}, from non-parametric statistics.
Loosely, a non-stationary bandits instance is $\beta$-H\"older if the mean reward sequence can be embedded into a $\beta$-H\"older function (formally defined in Section \ref{sec:formulation}).
Our formulation provides a unified framework for existing non-stationary models: Special cases have been studied for $\beta =1$ \citep{besbes2014stochastic} and  $\beta\in (0,1)$ \citep{manegueu2021generalized}.
\item {\bf Upper bounds.} We present a simple policy and analyze its regret for the settings $\beta = 1,2$. 
Specifically, we show the following results.
\benum
\item {\bf A simple derivative-free policy:} We propose a simple  policy, dubbed the {\em Budgeted Exploration} (BE) policy, which ``restarts'' periodically by discarding all historical observations.
Surprisingly, our policy does {\em not} use any derivative information.
\item {\bf Upper bound for $\beta =1$: an alternate proof.} 
Theorem 1 in \citealt{besbes2014stochastic} implies that their policy (which is different from ours) has $\tilde O(T^{2/3})$ regret for $\beta = 1$. 
We show that our BE policy has $\tilde O(T^{2/3})$ regret; see our Theorem \ref{thm:be_beta1}.
\item {\bf Regret analysis via a potential function argument:} We analyze the regret of the BE policy using a {\it potential \func} argument, a common technique in competitive analysis but less so in MAB. 
Informally, consider an adversary who aims to produce a mean-reward sequence to fool the learner and hence forge a high regret. 
We argue that if the regret is high in some interval, then the adversary loses a {\em proportionally} large amount of energy (in terms of $T$) and is therefore less powerful in the future.
Therefore, we can bound the total regret in terms of the initial energy of the adversary. 
\item {\bf Separation between smooth and non-smooth settings:} We show that our BE policy has $\tilde O(k^{4/5} T^{3/5})$ regret when $\beta=2$; see Theorem \ref{thm:ub_k_arms}.
We emphasize that this is the first {\em separation} between the smooth (i.e., $\beta\ge2$) and non-smooth (i.e., $\beta=1$) regimes for non-stationary bandits: 
For $\beta=1$, the minimax regret is $\Omg(T^{2/3})$ (see (b)), which is higher than our upper bound for $\beta =2$. 
\eenum 
\item {\bf Lower bounds.}  We provide an $\Omg(T^{(\beta+1)/(2\beta+1)})$ lower bound on the minimax regret that is valid for any integer $\beta\ge 1$; see Theorem \ref{thm:lb}. 
This result holds significance due to the following.
\benum
\item {\bf Minimax optimality.} For $\beta=2$, this lower bound matches our $\tilde O(T^{3/5})$ upper bound for $\beta =2$ up to logarithmic factors.
\item {\bf Lower bound for $\beta =1$: an alternative proof.} \cite{besbes2014stochastic} showed that for any total variation budget $V$, any policy has $\Omg(V^{1/3}T^{2/3})$ regret; see their Theorem 1. 
Our Theorem \ref{thm:lb} implies an alternative proof for this result by setting $\beta =1$. 
Moreover, the construction in \citealt{besbes2014stochastic} employs mean-reward sequences with ``jumps''.
In contrast, our proof uses mean-reward \xulie s that change continuously.
This suggests a surprising insight: Unlike between the smooth and non-smooth settings, there is {\em no} separation between the continuous and discontinuous settings. 
\item {\bf Conjecture: smooth non-stationary bandits is almost as easy as stationary bandits?} 
Our general lower bound leads us to the conjecture that the lower bounds can be matched for $\beta\ge 3$. 
Assuming it is true, then the minimax regret $T^{1/2 + O(1/\beta)}$ tends to $\sqrt T$, as $\beta\rar \infty$, almost matching that of {\em stationary} bandits. 
In other words, this would lead to a rather surprising insight: Non-stationary bandits with high smoothness is almost as manageable as stationary bandits.
\eenum
\eenum 

\section{Formulation}\label{sec:formulation}
We now formally define our non-stationary bandit model.
We are given a finite set $[k]$ of arms and a known finite time horizon $[T]$. (For any integer $n\ge 1$ we write $[n] := \{1,\dots,n\}$.) 
For each arm $a\in [k]$, let $(Z_a^t)_{t\in [T]}$ be independent $\sigma$-subgaussian random variables that represent the realized rewards. 
For simplicity, we assume $\sigma=1$; our results can be extended to any $\sigma>0$ \strfwd ly.
With this simplification, we can fully specify an {\it instance} by the {\it reward \func} $r_a(t) := \ho{E}[Z_a^t]$ of the arms.
As is standard in the prevailing literature, we assume that the mean rewards are bounded. 
Without loss of generality, we assume that $\ho{E}[Z_a^t] \in [-1,1]$ for all $a\in [k]$ and $t\in [T]$.

In each round $t\in [T]$, the learner selects an arm $A_t$ and receives an observable {\it reward} $Y_t := Z_{A_t}^t$.
Based on the past rewards observed, the learner selects the next arm $A_{t+1}$, with the goal of maximizing the cumulative rewards.
This decision-making process is called a {\it policy}. Formally, a policy $(A_t)_{t\in [T]}$ is a stochastic process such that $A_t$ is $\mathcal{F}_{t-1}$-measurable for each $t\in [T]$, where $\mathcal{F}_{t-1}$ is the $\sigma$-algebra generated by $\{Z_a^s: a\in [k], 1\le s\le t-1\}$.

An important special case is the {\em one-armed} setting, which is often used to highlight the essential ideas in non-stationary bandits (see, e.g., Section 35.3 of \citealt{lattimore2020bandit}).
There are two arms: a {\em static arm}, whose reward function is a constant, $0$, and a {\em changing arm}.  
The core challenge is to learn the sign of the mean reward of the changing arm. 

\subsection{The H\"older Class and Smooth Non-stationary Bandits}
For every arm $a$, the reward function $r_a(t)$ is specified over the {\bf discrete} domain $\{1,\dots,T\}$, making it unclear how to characterize the smoothness of non-stationarity in this context.
To do this, we first define the smoothness of a \func\ defined on a {\bf continuous} domain.

\begin{definition}[H\"older Class] For any $L>0$ and integer $\beta \ge 1$, we say a \func\ $f:[0,1]\rar \real$ is {\it $\beta$-H\"older} and write $f \in \Sigma(\beta, L)$ if (i) $f$ is ($\beta-1$)-times differentiable, and  (ii) $f^{(\beta-1)}$ and $f$ are $L$-Lipschitz. \end{definition}

For concreteness, observe that $f\in \Sigma(1,L)$ if and only if $f$ is $L$-Lipschitz, and that $f\in \Sigma(2,L)$ if and only if $f$ is differentiable and $f',f$ are $L$-Lipschitz. 

A bandit instance is $\beta$-H\"older if its reward \func\ can be {\em embedded} into a $\beta$-H\"older function in the following sense.

\begin{definition}[Smooth Non-stationary Instance]
\label{def:Holder_ins} We say that a sequence $\{r(t)\}$ can be {\em embedded} in a function $\mu:[0,1]\rar \real$ if for any $t\in [T]$, it holds that \[r(t) = \mu\lb(\frac tT\rb).\]
A non-stationary bandit \ins\ $r = \{r_a(t)\}$ is {\it $\beta$-H\"older}, if the reward function of every arm $a\in [k]$ can be embedded in some \func\ $\mu_a\in \Sigma(\beta, L)$.
\end{definition}

For example, in the one-armed case, consider $r_1(t)=f(t/T)$ for $f(x)= |x-\frac 12|$. 
This \ins\ is $1$-H\"older but {\bf not} $2$-H\"older. 
In fact, for any differentiable \func\ $g(x)$ with $r_1(t) = g(t/T)$, the derivative $g'(x)$ changes from $-1$ to $+1$ in an interval of length $O(1/T)$ near $x=\frac 12$. 
Consequently, $|g''(x)| = \Omg(T)$, but the definition of $2$-H\"older requires $g''$ be bounded by an {\em absolute} constant.

We emphasize that in Definition \ref{def:Holder_ins}, $\mu_a$ is defined on the {\it normalized} time scale $[0,1]$ while $r_a$ is defined on $[T] = \{1,\dots,T\}$.
To avoid confusion, we will use ``$x$'' for the $[0,1]$ scale and ``$t$'' for the $[T]$ scale. 
This work focuses on characterizing the minimax regret of the H\"older family, formally defined as follows.

\begin{definition}[The H\"older Family]
For any integer $k\ge 2$, $\beta \ge 1$, we denote by $\Sigma_k(\beta, L)$ the family of $k$-armed, $\beta$-H\"older \ins s.
\end{definition}

We use $\Sigma_1(\beta,L)$ to denote the family of one-armed instances (although these instances actually involve two arms).
Next, we will define the metric for  assessing a policy.

\subsection{The Regret} 
The problem is trivial if $r_a(\cdot)$'s were known. 
In each round $t$, the optimal policy chooses $a_t^*\in \arg\max_{a\in\{0,1\}} r_a(t)$ and collects, in expectation, a reward of $r^*(t) = \max_{a\in \{0,1\}} \{r_a(t)\}$. 
When $r_a(\cdot)$'s are unknown, the policy needs to learn the reward functions. 
Differently from stationary stochastic bandits or adversary bandits - where a policy is compared against the best fixed arm - the benchmark in the non-stationary setting may select different arms over time. 
The {\it dynamic regret} is the difference between this benchmark and the expected total rewards of the policy.

\begin{definition}[Regret]
The {\it dynamic regret} (or simply {\em regret}) of a policy $A$ for an instance $r=\{r_a(t)\}$ is 
$\textstyle\mathrm{Reg}(A, r) = \ho{E} \lb[\sum_{t=1}^T \lb(r^*(t) - Z_{A_t}^t\rb)\rb].$
For a family $\cal F$ of \ins s, the {\it worst-case regret} of $A$ is $\max_{r\in \cal F} \mathrm{Reg}(A, r)$. 
The {\it minimax regret} of $\cal F$ is the minimum achievable worst-case regret among all policies, formally defined as $\inf_A \sup_{r\in \cal F} {\rm Reg}(A,r)$, where ``inf'' is over all policies.
\end{definition}

Compared to static regret, dynamic regret better captures the loss due to not knowing the underlying reward function and therefore serves as a better performance metric compared to against the best fixed arm.
Moreover, it is easy to verify that any bound on the dynamic regret carries over to static regret, but not vice versa. 
Furthermore, in practical scenarios, such as those involving seasonal patterns, dynamic regret tends to be more meaningful than static regret.
For example, suppose that the reward function is given by $f(t) = \cos (2\pi t)$. 
Then, any fixed-arm policy has $0$ static regret and $\Omg(T)$ dynamic regret.

It is crucial for the reader not to confuse {\it H\"older} class with {\it H\"older continuity}. 
A function $f:[0,1]\rar \real$ is {\it $\alpha$-H\"older-continuous} if $|f(x) - f(x')|\le |x-x'|^\alpha$ for all $x,x'\in [0,1]$.
This is only meaningful for $\alpha\le 1$, as otherwise the function must be a constant.
In its most general form, for any $\beta>0$, the $\beta$-H\"older class consists of all functions $f:[0,1]\rar \real$ such that (i) $f$ is $\ell := \lfloor \beta \rfloor$ times differentiable and (ii) $f^{(\ell)}$ is $(\beta-\ell)$-H\"older-continuous; see, e.g., Definition 1.2 of \citealt{tsybakov2004introduction}.
To highlight the key ideas, we focus on integer $\beta$'s, although the results can be extended to non-integer $\beta$'s  \strfwd ly.

\cite{manegueu2021generalized} considered {\it H\"older-continuous} non-stationary  mean reward and considered embedding the mean reward sequence into a function defined on a continuous domain, as specified in our eqn. \eqref{eqn:embed}; see their Section 5.3.
Although they referred to their problem as ``switching smooth expected rewards'', the slope of the mean reward may change abruptly, and hence the instances they considered are not really smooth.

From a more unified point of view, \cite{manegueu2021generalized} considered $\beta<1$, \cite{besbes2014stochastic} considered $\beta = 1$, and we are the first to consider $\beta >1$, which is our main contribution.
The leap from $\beta=1$ to $\beta > 1$ is substantially {\bf more} challenging than from $\beta=1$ to $\beta < 1$. 
The former must leverage derivatives, whereas the latter can still work with the functions themselves while better tracking general $\beta < 1$ exponents in arguments similar to the case of $\beta=1$.

\section{Related Work}\label{sec:related_work}
We classify the related papers based on the following four characteristics: model-free non-stationarity, model-based non-stationarity, smoothness in bandits, and monotonicity in exploration.

\noindent{\bf Model-free Non-stationarity.}
Recent multi-armed bandit literature has recognized the importance of 
non-stationarity. 
We say that a model is {\em model-free} if the non-stationarity is not defined by a statistical temporal model.
\cite{auer2002nonstochastic} considered an extremely general model where the reward function can be {\em arbitrarily} non-stationary. 
However, their $\tilde O(\sqrt T)$ regret bound is not comparable to ours, since they compare against the best fixed arm in hindsight (i.e., static regret).
It should be noted that this frame can also capture a slightly more general benchmark called {\em tracking regret}, where we compare against the best policy that changes arms a certain number of times.
It can be shown that for any integer $m>0$, the EXP4 policy has $\tilde O(\sqrt{Tmk})$ regret against any policy that changes arm $m$ times; see eqn. (31.2) in \citealt{lattimore2020bandit}.

Variants of the UCB \alg\ have been investigated for non-stationary environments.
The sliding-window UCB \alg\ was introduced without analysis by \cite{kocsis2006bandit}.
\cite{garivier2011upper} analyzed this \alg\ assuming that the mean rewards remain constant during periods and change at an unknown time (``breakpoint'').
This approach has also been extended to linear bandits \citep{cheung2019learning} and MDP \citep{gajane2018sliding}.

Previous work has considered several aspects: contextual information \citep{luo2018efficient,russac2019weighted}; uncertainty in the number of changes
\citep{auer2019adaptively,chen2019new}; and Bayesian prior \info\ \citep{trovo2020sliding}, to name a few.
However, these papers make no smoothness assumptions on the non-stationarity.

In addition, the ``smoothly changing setting'' in \citet{trovo2020sliding} considers Lipschitz reward functions that may be {\em non-differentiable}, and is therefore incomparable with our work.
\cite{zhao2020dynamic} considered online convex optimization with ``smooth'' functions $f_t(x)$. 
They assumed that in each round the arriving loss function has a bounded gradient in the {\em action} $x$ (instead of in the time $t$), which is different from our focus. \\

\noindent{\bf Model-based Non-stationarity.}
There is a substantial literature that studies the dynamic aspect of the MAB problem by modeling the evolution of the reward function as a stochastic process. 
Motivated by task scheduling, \cite{gittins1979dynamic} considered a Bayesian setting in which the state of the selected arm evolves in a Markovian fashion and presented an optimal policy, based on the famous {\em Gittens index}.
Even more related is the {\em restless bandits} problem \citep{whittle1988restless}, where the state of each arm evolves regardless of whether it is selected, but we only observe the state and reward of the selected arm.  
This problem is rather challenging, since we need to learn the evolution of {\em all} the Markov chains. 

Several other papers explore more specialized stochastic models to account for non-stationarity.
\cite{slivkins2008adapting} assumed that the reward function is drawn from a known reflected Brownian motion with variance $\sigma^2$, and showed an optimal regret of $\tilde O\lb(k\sigma^2\rb)$.
Other evolution models include other Gaussian processes \citep{grunewalder2010regret} and discrete Markov chains \citep{zhou2021regime}. \\

\noindent{\bf Smoothness in Bandits.}
Model smoothness and the H\"older class are often studied in non-parametric statistics; see, e.g., \citealt{gyorfi2002distribution} and \citealt{tsybakov2004introduction}.
In the MAB literature, $\beta$-H\"older smoothness has been considered in contextual bandits, where  the mean reward is assumed to have a smooth dependence on the context. 
\cite{rigollet2010nonparametric} and \cite{perchet2013multi} primarily concentrated on cases where $\beta \le 1,$ while \cite{hu2020smooth} and \cite{gur2022smoothness} directed their attention towards scenarios where $\beta \ge 1.$

The key difference from our problem is {\em monotonicity} in the direction for exploration: Exploration can be done in any direction in the context space, but this is not true for the time axis.
Existing policies generally do not preserve monotonicity.
For example, Algorithm 1 in \citealt{hu2020smooth} maintains a hierarchical tree and zooms in on regions that are worth further exploration. 
This idea is apparently infeasible in our setting. \\

\noindent{\bf Monotonicity in Exploration.}
The {\em one-armed} version of our problem can be viewed as scanning the reward function monotonically at {\em uniform} speed, and making a decision on whether to pull arm $1$ in each round.
This is related to a line of work in which the learner is required to explore a domain {\em monotonically}, with the goal of stopping near the {\em mode} (i.e., maximum) of the unknown reward function \citep{chen2021multi,jia2021markdown,jia2022dynamic,salem2021algorithmic}. 

This problem can be viewed as the ``dual'' of our problem, where the learner controls the speed of scanning but must pull the changing arm in every round.
In this line of work, the impact of smoothness has been considered. 
The minimax regret when $\beta=1$ is $\tilde \Theta(T^{3/4})$ \citep{chen2021multi} and $\tilde \Theta(T^{5/7})$ when $\beta=2$ \citep{jia2021markdown}, under the extra assumption that the reward function is unimodal.
We reiterate that these results are not applicable to our problem, as we do not have control over the scanning speed. 

\section{Lower Bounds}\label{sec:lb}
In this section, we construct a $\beta$-H\"older family of one-armed instances for any given integer $\beta\ge 1$, and show that every policy has $\Omg(T^{(\beta+1)/(2\beta+1)})$ regret on it.
Each curve is defined as a {\em smooth} concatenation of several basic curves, in a way that prevents the learner from anticipating future evolution of the curve, even with access to past information.

\subsection{Building a Bowl}\label{sec:F_beta}
We will partition the normalized time horizon $[0,1]$ into {\em epochs} of length $\delta$.
Each curve in our family is a smooth concatenation of more basic functions, each being either a constant curve or a {\em bowl-shaped curve} (or a simple {\em bowl}).
A bowl consists of two {\em sides}, each of width $\Theta(\delta)$, and a flat part (i.e., constant) connecting them that has width $\Theta(\delta)$.
Specifically, each side of a bowl has {\em vanishing} derivatives up to order $(\beta-1)$ at the endpoints.
This property enables us to concatenate a bowl {\em smoothly} with another bowl or with the constant curve, satisfying the requirement for the H\"older class.
Furthermore, each bowl has a maximum height $\Omg(\delta^\beta)$. 
We will use this property to make a policy suffer $\Omg(\delta^\beta)$ regret on each epoch.
We state these properties formally below.

\begin{proposition}[Side of the Bowl]\label{prop:pyramid}
For any fixed integer $\beta\geq 1$, there exists a family $\{g_\eps\}$ of $(\beta-1)$-times continuously differentiable \func s where $g_\eps$ is defined on $[0,\eps]$, with\\
{\rm (i) vanishing derivatives:} $g^{(j)}_\eps(0)=g^{(j)}_\eps(\eps)=0$ for any $j=1,\dots,\beta-1$,\\
{\rm (ii) monotonicity:} $g'_\eps\geq 0$,\\
{\rm (iii) polynomial growth:}
$g_\eps (\eps) =\Theta(\eps^{\beta})$ as $\eps \rar 0^+$, or equivalently, $g_\eps (\eps)=C_\beta(\eps)\cdot \eps^{\beta}$ where $C_\beta(\eps)=\Theta(1)$, and
\\ 
{\rm (iv) Lipschitz derivatives:} $g^{(\beta-1)}_\eps$ is $1$-Lipschitz.
\end{proposition}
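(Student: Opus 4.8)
The plan is to reduce everything to one fixed \emph{template} profile on $[0,1]$ and then obtain $g_\eps$ by the natural rescaling $g_\eps(x)=\eps^{\beta}\,\phi(x/\eps)$ on $[0,\eps]$. Under this rescaling one has $g^{(j)}_\eps(x)=\eps^{\beta-j}\,\phi^{(j)}(x/\eps)$, so property (i) will follow from $\phi^{(j)}(0)=\phi^{(j)}(1)=0$ for $j=1,\dots,\beta-1$; (ii) from $\phi'\ge 0$; (iii) from $\phi(1)$ being a positive constant; and (iv) from the identity $(g^{(\beta-1)}_\eps)'(x)=\phi^{(\beta)}(x/\eps)$, whose sup norm equals $\sup_{[0,1]}|\phi^{(\beta)}|$. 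Thus it suffices to exhibit a $C^{\infty}$ function $\phi:[0,1]\to\real$ that is nondecreasing, has $\phi^{(j)}$ vanishing at both endpoints for $1\le j\le \beta-1$, satisfies $\phi(1)>0$, and has $\sup_{[0,1]}|\phi^{(\beta)}|\le 1$ (the last being arranged at the end by dividing $\phi$ by a constant).

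The key idea for producing such a $\phi$ — and the one point that requires a bit of thought, since monotonicity of $\phi$ and the vanishing of its \emph{higher} derivatives at the endpoints pull in opposite directions — is to take $\phi$ to be an antiderivative of a nonnegative ``flat bump.'' Concretely, I would set
\[
\phi(x)\;=\;\int_0^x \bigl(s(1-s)\bigr)^{\beta-1}\,ds ,
\]
a polynomial of degree $2\beta-1$, hence $C^{\infty}$ (in particular $(\beta-1)$-times continuously differentiable, as required). Then $\phi'=\bigl(x(1-x)\bigr)^{\beta-1}\ge 0$ on $[0,1]$, giving monotonicity, and $\phi(0)=0$, $\phi(1)=\int_0^1 \bigl(s(1-s)\bigr)^{\beta-1}\,ds=\bigl((\beta-1)!\bigr)^2/(2\beta-1)!>0$, a positive constant depending only on $\beta$. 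Moreover $\phi'$ has a zero of order $\beta-1$ at each endpoint, so $\phi^{(1)},\dots,\phi^{(\beta-1)}$ all vanish at $x=0$ and $x=1$, which is exactly what (i) needs; and since $\phi$ is a polynomial, $\phi^{(\beta-1)}$ is Lipschitz on $[0,1]$ with some finite constant $M_\beta$ depending only on $\beta$.

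To finish, I would rescale: replace $\phi$ by $\phi/\max(1,M_\beta)$ so that $\phi^{(\beta-1)}$ becomes $1$-Lipschitz, equivalently $\sup_{[0,1]}|\phi^{(\beta)}|\le 1$, and then define $g_\eps(x)=\eps^{\beta}\phi(x/\eps)$ on $[0,\eps]$. The four claims then follow mechanically from the scaling identities above: (i) from the endpoint vanishing of $\phi^{(1)},\dots,\phi^{(\beta-1)}$; (ii) from $\phi'\ge 0$; (iii) with $C_\beta=\phi(1)=\Theta(1)$, indeed constant in $\eps$; and (iv) from $(g^{(\beta-1)}_\eps)'(x)=\phi^{(\beta)}(x/\eps)$ together with $\sup|\phi^{(\beta)}|\le 1$. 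The case $\beta=1$ is a useful sanity check: the template degenerates to $\phi(x)=x$, so $g_\eps(x)=x$, conditions (i)–(ii) are vacuous, and (iii)–(iv) hold with constant $1$. I do not expect any genuine obstacle beyond the choice of the integrand $\bigl(x(1-x)\bigr)^{\beta-1}$, which is precisely what reconciles (i) with (ii); the remaining verifications are bookkeeping with the rescaling.
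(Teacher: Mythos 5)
Your construction is correct, and it reaches the proposition by a genuinely different and considerably more elementary route than the paper. The paper builds $g_\eps$ from the top down: it prescribes the highest derivative $g_\eps^{(\beta-1)}$ to be a ``neutral flock'' of pyramid functions --- piecewise-linear bumps of slope $\pm 1$ weighted by a recursively defined sign vector $\nu^{k}=\nu^{k-1}\oplus(-\nu^{k-1})$ --- and then anti-differentiates $(\beta-1)$ times; the endpoint-vanishing property (i) is then extracted from an induction showing that anti-differentiation preserves the flock structure, combined with a ``symmetric area'' cancellation for neutral sign patterns. Your template $\phi(x)=\int_0^x \bigl(s(1-s)\bigr)^{\beta-1}\,ds$ gets all four properties in one stroke: monotonicity and the vanishing of $\phi^{(1)},\dots,\phi^{(\beta-1)}$ at both endpoints are both read off the single nonnegative integrand, which has zeros of order $\beta-1$ at $0$ and $1$, and the scaling identity $g_\eps^{(j)}(x)=\eps^{\beta-j}\phi^{(j)}(x/\eps)$ delivers (iii) and (iv) after the one-time normalization by $M_\beta=\sup_{[0,1]}|\phi^{(\beta)}|$, which is finite because $\phi$ is a polynomial of degree $2\beta-1$. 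What the paper's flock construction buys is that $g_\eps^{(\beta-1)}$ is exactly $1$-Lipschitz by design (pyramids have slope $\pm1$), with the explicit piecewise-linear profile depicted in its figures; what yours buys is a closed-form polynomial, a two-line verification of (i) and (ii) that sidesteps the entire flock/neutrality machinery, and a constant $C_\beta(\eps)=\phi(1)/\max(1,M_\beta)$ that is exactly independent of $\eps$ rather than merely $\Theta(1)$. Either version supports the downstream use in Definition~\ref{def:bowl} and Theorem~\ref{thm:lb} unchanged, up to the numerical value of $C_\beta$.
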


\begin{figure}
\centering \includegraphics[width=0.8\linewidth]{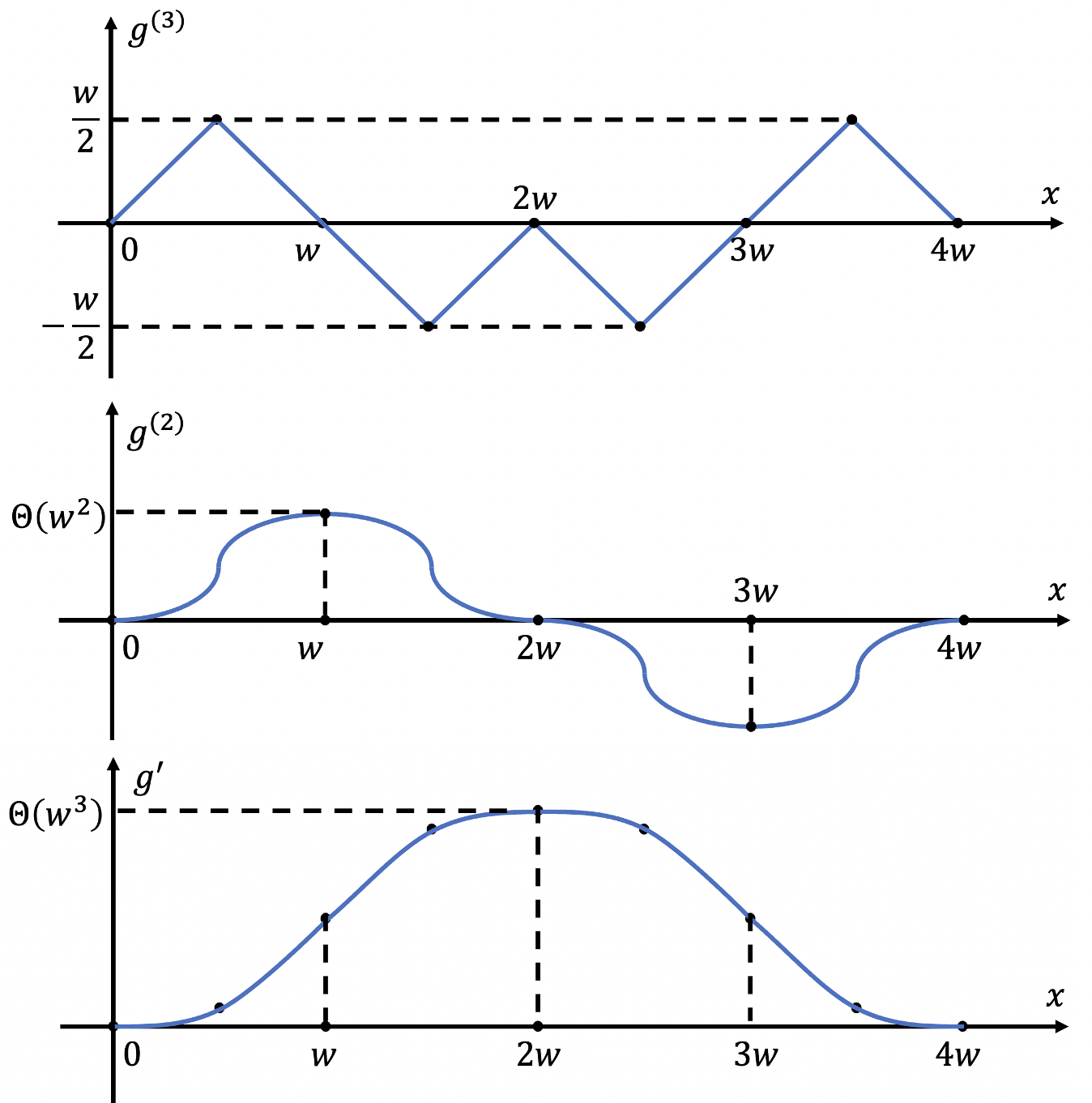}
\caption{Illustration of $g_\eps$ for $\beta = 4$: $g^{(3)}$ is a ``flock'' of pyramid-shaped \func s. The \func\ $g^{(2)}(x)$ is defined as the integration of $g^{(3)}$ from $0$ to $x$. 
Similarly, $g^{(1)}(x)$ is the integration of $g^{(2)}$ from $0$ to $x$. 
As the key property, any derivative \func\ lower than order $3$ vanishes at the boundary points, i.e., $0$ and $4w$.}
\label{fig:pyramid_flock_main_body}
\end{figure}

We illustrate the ideas in Figure \ref{fig:pyramid_flock_main_body}  for the special case of $\beta=4$.
Each pyramid-shaped function (formally defined in Section \ref{apdx:lb}) has width $w=\eps/4$, and slope $1$ and $-1$ on the two sides of its peak. 
Set the highest derivative, i.e., $g^{(3)}$, to be the concatenation of four pyramids, where the second and third pyramids are inverted around the $x$-axis; see the top-most subfigure.
Apparently, this function is $1$-Lipschitz, so (iv) holds.
Thus, if this \func\ is the third-order derivative of some function $g$, then $g$ is $3$-H\"older.
Such a \func\ $g$ can be constructed by repeatedly integrating the derivatives: For $i=3,2,1$, we iteratively define \[g^{(i-1)} (x) = \int_0^x g^{(i)}(s)\ ds\]
for each $x\in [0,\eps]$, and visualize $g^{(2)}, g^{(1)}$ in the other two subfigures in Figure \ref{fig:pyramid_flock_main_body}.

To complete the proof, we verify properties (i) to (iii) as follows.
\benum
\item[(i):] It is \strfwd\ to verify that $g^{(2)}(x),g^{(1)}(x),g^{(0)}(x)$ are all $0$ at the boundary points (i.e., $x=0$ and $x=4w$) as illustrated in the bottom subfigure.
Essentially, this follows from the \sym\ of the derivative \func s around the $x$-axis, i.e., the areas above and below $0$ are equal.
\item[(ii):] Observe in the bottom subfigure that $g'$ is nonnegative, so (ii) also holds.
\item[(iii):] Note that $g^{(3)}$ is $1$-Lipschitz and \sats\ $g^{(3)}(0) = 0$, so $g^{(3)}(\eps) = O(\eps)$.
It is then \strfwd\ to verify that $g^{(i)}(\eps) = O(\eps^{4-i})$ for any $0\le i\le 3$ and $\eps>0$. 
In \parti, we have $g(\eps)=O(\eps^4)$.
\eenum

We defer the details to Section \ref{apdx:lb}. In the next subsection, we will explain how to combine these bowls to construct a family of smooth \func s.

\subsection{Definition of the Family $\cal F_\beta$}

\begin{figure}
    \centering    \includegraphics[width=0.9\linewidth]{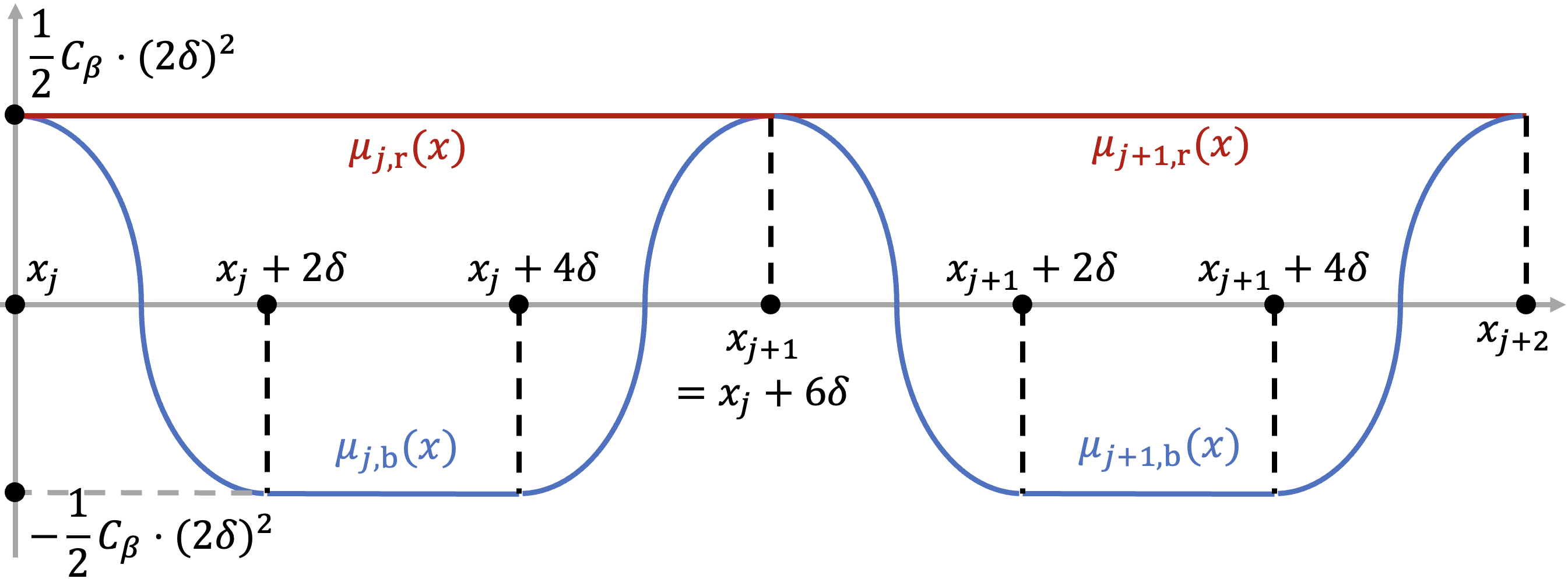}\caption{Construction of the family $\mathcal{F}_\beta$, illustrated in the case of $\beta=2$.
    The ``snapshots'' of the curves on the two epochs $[x_j, x_{j+1}]$ and $[x_{j+1}, x_{j+2}]$. 
    For any combination of red or blue curves, the change at any endpoint is {\em smooth} -   both red and blue have $0$ derivative at any $x_j$.}
    \label{fig:instance}
\end{figure}

We describe this family at a high level using Figure~\ref{fig:instance}.
Fix a suitable $\delta >0$, which we will later choose to be $\tilde \Theta(T^{1/(2\beta+1)})$, we partition $[0,1]$ into {\em epochs} $[x_{j}, x_{j+1}]$ with $x_j = j \cdot 6\delta$ for $j=0,1,\dots, m-1$ where $m=1/(6\delta)$.
For each reward \func\ in this family, its restriction on each epoch is either  constant, on the order of $\delta^{\beta}$, or a {\em bowl shaped} curve; see the red and blue curve \resp\ in Figure \ref{fig:instance}.
The family $\cal F_\beta$ consists of all such $2^m$ choices.
A bowl curve $b$ on $[x_j, x_{j+1}]$ is $(\beta-1)$-times differentiable, with vanishing derivatives at $x_j$ and $x_{j+1}$, which enables a {\bf smooth} concatenation with a constant curve or another bowl curve. 

More concretely, in Figure \ref{fig:instance}, over the two epochs $[x_j, x_{j+1}]$ and $[x_{j+1}, x_{j+2}]$, an instance can correspond to each of the following $2^2=4$ combinations:
a constant curve (all red); 
the curve constituted by two bowl-shaped curves, one on each epoch (all blue);  
and the two combinations of constant and bowl-shaped curves (first red then blue, and first blue then red).

We now formalize these ideas.
As Figure \ref{fig:instance} illustrates, a bowl curve is obtained by connecting two {\em rotated} copies of $g_\delta$ with a constant \func, which we formally define as follows. Recall  from Proposition \ref{prop:pyramid} 
that $g_\delta(x) = C_\beta(\delta)\cdot \delta^\beta$. 

\begin{definition}[Construction of a Bowl] \label{def:bowl}
Fix any integer $\beta\geq 1$ and define
$\delta=\delta(\beta,T)=\lb(2^{2(\beta+1)} C_\beta^2 T\rb)^{-1/(2\beta+1)}.$
For $j=0,\dots,m-1$ and $x\in [0,1]$, define 
\[\mu_{j+1,\mathrm{r}}(x) = \frac 12 C_\beta(2\delta) \cdot (2\delta)^{\beta}\cdot \mathbbm{1}_{[x_j,x_{j+1})}(x),\] and   
\[\mu_{j+1,\mathrm{b}}(x) = \begin{cases}
-g_{2\delta}(x-x_j) + \frac 12 C_\beta (2\delta)^\beta, &\text{if } x\in [x_j, x_j +2\delta),\\
-\frac 12 C_\beta (2\delta)^\beta, &\text{if } x\in [x_j+2\delta, x_j +4\delta),\\
g_{2\delta}(x-x_j-4\delta) - \frac 12 C_\beta (2\delta)^\beta, &\text{if } x\in [x_j+4\delta, x_{j+1}),\\
0, &\text{else.}
\end{cases}
\] 
\end{definition}

Each \func\ in $\cal F_\beta$ corresponds to a binary vector that encodes its color in each epoch.
\begin{definition}[The Family $\mathcal{F}_\beta$]
For any $v=(v_1,\dots, v_m)=\{\mathrm{r}, \mathrm{b}\}^m$, let $\mu_v(x) = \sum_{j=1}^{m} \mu_{j, v_j}(x)$.
We define 
$\mathcal F_{\beta} = \{\mu_v: v\in\{\mathrm{r},\mathrm{b}\}^m\}$.
\end{definition}

It is \strfwd\ to verify using Proposition \ref{prop:pyramid} that for every configuration $v\in \{\pm 1\}^m$, the \func\ $\mu_v$ is $\beta$-H\"older. 
In fact, by (i), if $x$ is a multiple of $2\delta$, then for any $1\leq i\leq \beta-1$, the left and right order-$i$ derivatives at $x$ are both $0$, which ensures a smooth transition.

The construction above illustrates the role of $\beta$. Note that the total variation of the bowl-shaped curve is only $O(\delta^\beta)$.
Therefore, the more smoothness (i.e., $\beta$) we ask for, the less drastically the bowl curve can vary.
This suggests that the minimax regret should decrease in $\beta$. 
We formalize this intuition in the next subsection.

\subsection{The Main Lower Bound} \label{sec:lower}
Now we are ready to state the lower bound for the one-armed case. 
Recall that $m=1/(6\delta)$.
With some foresight, we choose $\delta=\Theta(T^{-1/(2\beta+1)})$.
We slightly abuse the notation by writing ${\rm Reg}(A,v)$ the regret of a policy $A$ when the mean reward of the changing arm is given by $r_1(t)= \mu_v (t/T)$.

\begin{theorem}[Main Lower bound]
\label{thm:lb}
For any integer $\beta\geq 1$ and policy $A$, there exists $v\in \{{\rm r}, {\rm b}\}^m$ such that
\[\mathrm{Reg}(A, v) > \frac 1{24}\cdot 2^{-\beta} \lb(C_\beta\rb)^{-\frac {2\beta}{2\beta+1}}\cdot T^{\frac{\beta+1}{2\beta+1}}.\]
\end{theorem}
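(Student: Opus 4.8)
The plan is to reduce to a two-point (or rather, epoch-by-epoch) information-theoretic argument in the style of the classical $T^{2/3}$ lower bound for non-stationary bandits, but exploiting the structure of $\mathcal F_\beta$. Consider the one-armed case, where $r_0\equiv 0$ and $r_1=\mu_v$ for $v\in\{\mathrm r,\mathrm b\}^m$. The key observation is that the family is a product over epochs: on epoch $j$, the reward function is either the constant $C_\beta(2\delta)(2\delta)^\beta=\Theta(\delta^\beta)$ (color $\mathrm r$) or the bowl curve $\mu_{j,\mathrm b}$ (color $\mathrm b$), and these two choices are made independently across epochs. On a $\mathrm b$-epoch the instantaneous regret of pulling the wrong arm is $\Theta(\delta^\beta)$ over a $\Theta(\delta)$-fraction of the epoch (the flat bottom at height $-\tfrac12 C_\beta(2\delta)^\beta$), while on an $\mathrm r$-epoch the optimal arm is arm $1$ with gap $\Theta(\delta^\beta)$. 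So within a single epoch of length $6\delta T$ rounds, distinguishing $\mathrm r$ from $\mathrm b$ is exactly a best-arm-identification problem with gap $\Delta=\Theta(\delta^\beta)$, and any policy that has not spent $\Omega(\Delta^{-2})=\Omega(\delta^{-2\beta})$ samples in that epoch must incur $\Omega(\delta T \cdot \Delta)=\Omega(\delta^{\beta+1}T)$ regret there with constant probability.

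Concretely, I would fix an epoch $j$, let $P_{\mathrm r}$ and $P_{\mathrm b}$ denote the laws of the full observation sequence under the instance that is color $\mathrm r$ (resp. $\mathrm b$) on epoch $j$ and fixed arbitrarily elsewhere, and bound $\mathrm{KL}(P_{\mathrm r}\,\|\,P_{\mathrm b})$ by the standard divergence decomposition: it is at most $\mathbb E_{\mathrm r}[N_j]\cdot \sup_x \mathrm{KL}\big(\mathrm{Ber}(\tfrac{1+\mu_{j,\mathrm r}(x)}2)\,\|\,\mathrm{Ber}(\tfrac{1+\mu_{j,\mathrm b}(x)}2)\big) = O\big(\mathbb E_{\mathrm r}[N_j]\cdot \delta^{2\beta}\big)$, where $N_j$ is the number of pulls of arm $1$ during the rounds of epoch $j$ (only those pulls carry information, since arm $0$ has mean $0$ in both instances). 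By Pinsker / the Bretagnolle–Huber inequality, if $\mathbb E[N_j]\le c\,\delta^{-2\beta}$ for a small constant $c$, then the policy cannot tell the two instances apart on epoch $j$, and hence on at least one of the two instances it incurs $\Omega(\delta^{\beta+1}T)$ regret on epoch $j$. Summing: either (a) there are $\Omega(m)=\Omega(\delta^{-1})$ epochs on which $\mathbb E[N_j]\le c\delta^{-2\beta}$ — giving total regret $\Omega(\delta^{-1}\cdot\delta^{\beta+1}T)=\Omega(\delta^\beta T)$ — or (b) $\sum_j \mathbb E[N_j] \ge \Omega(\delta^{-1}\cdot\delta^{-2\beta})=\Omega(\delta^{-(2\beta+1)})$, but this sum is at most $T$, which is a contradiction once $\delta$ is chosen so that $\delta^{-(2\beta+1)}\gg T$. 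With the choice $\delta=\Theta\big((C_\beta^2 T)^{-1/(2\beta+1)}\big)$ from Definition~\ref{def:bowl}, we get $\delta^\beta T = \Theta\big(C_\beta^{-2\beta/(2\beta+1)} T^{(\beta+1)/(2\beta+1)}\big)$, which is exactly the claimed bound; tracking the constants $2^{-\beta}$ and $1/24$ is where the specific normalization of $\delta$ and the factor-$2\delta$ bookkeeping in Definition~\ref{def:bowl} pays off.

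To make the "on at least one instance" step clean and to get a single instance $v$ that is bad, I would either (i) average over $v$ drawn uniformly from $\{\mathrm r,\mathrm b\}^m$ and use linearity plus the per-epoch argument conditioned on the colors of the other epochs, so that $\mathbb E_v[\mathrm{Reg}(A,\mu_v)] = \Omega(\delta^\beta T)$ and hence some $v$ achieves it; or (ii) run a fictitious-play / hybrid argument epoch by epoch. The averaging version is cleanest: $\mathrm{Reg}(A,\mu_v)\ge \sum_j \mathrm{Reg}_j$, where $\mathrm{Reg}_j$ is the regret restricted to epoch $j$'s rounds, and for each $j$, $\mathbb E_{v}[\mathrm{Reg}_j] \ge \Omega(\delta^{\beta+1}T)\cdot \Pr_v[\text{epoch }j\text{ under-sampled}]$, and the under-sampling event has constant probability because $\mathbb E_v[\sum_j N_j]\le T \ll m\,\delta^{-2\beta}$ forces a constant fraction of epochs to have $\mathbb E[N_j] \le c\delta^{-2\beta}$ by Markov.

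I expect the main obstacle to be the lower bound $\mathrm{Reg}_j = \Omega(\delta^{\beta+1}T)$ \emph{on whichever instance is indistinguishable} — one must be careful that on a $\mathrm b$-epoch the regret is genuinely collected: when the true color is $\mathrm b$ the optimal action on the flat-bottom sub-interval $[x_j+2\delta,x_j+4\delta)$ is arm $0$ (reward $0 > -\tfrac12 C_\beta(2\delta)^\beta$), so every pull of arm $1$ there costs $\Omega(\delta^\beta)$; and when the true color is $\mathrm r$ the optimal action is always arm $1$, so every pull of arm $0$ on epoch $j$ costs $\Omega(\delta^\beta)$. A policy that is within $\mathrm{KL}=O(1)$ of not knowing the color must, on at least one of these two instances, play the wrong arm on a constant fraction of the $\Theta(\delta T)$ relevant rounds — this is the usual "if you can't distinguish, you must behave the same, so you're wrong half the time somewhere" lemma, and the bookkeeping to get it with explicit constants is the only delicate part. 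Everything else (verifying $\mu_v\in\Sigma(\beta,L)$, the KL bound via a second-order Taylor estimate on $x\mapsto \mathrm{kl}(\tfrac{1+p}2\|\tfrac{1+q}2)\le (p-q)^2/(1-\max(|p|,|q|)^2)$, and the final substitution of $\delta$) is routine.
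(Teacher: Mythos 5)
Your proposal is correct and follows essentially the same route as the paper: an epoch-by-epoch KL/Pinsker indistinguishability argument between the red and blue colorings, a per-epoch regret of $\Omega(\delta^{\beta+1}T)$ on whichever color the adversary (or a uniform average over $v$) picks, summed over $m=\Theta(1/\delta)$ epochs with the same choice of $\delta$. The only cosmetic difference is that your under-sampling dichotomy via $\mathbb{E}[N_j]$ is not needed here, since the epoch length $6\delta T$ is itself $O(\delta^{-2\beta})$, so the total per-epoch KL is at most $1/2$ unconditionally — which is exactly how the paper's Lemma~\ref{lem:non-optimal} proceeds.
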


We outline the analysis and defer the formal proof to Appendix \ref{apdx:lb}.
First, observe that due to the smooth concatenation, the learner is unable to predict whether in the upcoming epoch the curve is bowl-shaped or constant.
Thus, we can view the problem as $m$ separate instances, each having $6\delta T$ rounds, and hence it suffices to lower bound the regret on each epoch.

To do this, fix an epoch $[x_j, x_{j+1}]$. 
As a key step, observe that for any point in
$[x_j+2\delta, x_j + 4\delta]$, the optimal arm is arm $0$ if $r_1$ follows the blue curve, and arm $1$ \ow.
We show that any policy has  at least $1/2$ probability of choosing the wrong arm in each of these rounds.

We now make this statement formal. Consider two \ins s that are identical up to the $(j-1)^{\mathrm{st}}$ epoch. 
For any prefix $u\in \{\mathrm{r},\mathrm{b}\}^{j-1}$ and color $\chi\in \{\mathrm{r}, \mathrm{b}\}$, we will consider the \ins s $\mu_{u\oplus \mathrm{r}}$ and $\mu_{u\oplus \mathrm{b}}$ where $\oplus$ denotes the vector concatenation.
Recall that $t_j = x_j T$.

\begin{lemma}[Likely to Select a  Wrong Arm]
\label{lem:non-optimal}
For any round $t$ in the $j$-th epoch $[t_{j-1}, t_j]$, prefix $u\in \{\mathrm{r},\mathrm{b}\}^{j-1}$ and policy $A=(A_s)_{s\in [T]}$, it holds that \[\ho{P}_{u \oplus {\rm b}}[A_t = 1] + \ho{P}_{u \oplus {\rm r}}[A_t = 0] \ge \frac 12.\]
\end{lemma}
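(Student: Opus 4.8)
The plan is to use a standard information-theoretic (change-of-measure) argument comparing the two instances $\mu_{u\oplus\mathrm{r}}$ and $\mu_{u\oplus\mathrm{b}}$, which are \emph{identical on the first $j-1$ epochs}. First I would observe that up to and including time $t_{j-1}$, the two instances induce exactly the same distribution over the learner's observations, hence the same distribution over the internal state at the start of the $j$-th epoch; so the only way the algorithm can distinguish them is via rewards observed \emph{within} the $j$-th epoch, and by time $t\in[t_{j-1},t_j]$ it has pulled arms for at most the $t-t_{j-1}\le 6\delta T$ rounds of this epoch. Let $P_{\mathrm{b}},P_{\mathrm{r}}$ denote the laws of the full observation sequence up to round $t$ under the two instances. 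By the Bretagnolle–Huber inequality (or Pinsker plus the standard two-point bound),
\[
\ho{P}_{u\oplus\mathrm{b}}[A_t=1]+\ho{P}_{u\oplus\mathrm{r}}[A_t=0]\;\ge\;\tfrac12\exp\!\big(-\mathrm{KL}(P_{\mathrm{b}},P_{\mathrm{r}})\big),
\]
so it suffices to show $\mathrm{KL}(P_{\mathrm{b}},P_{\mathrm{r}})\le\ln 2$, i.e. roughly $\le 1$.

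Next I would bound the KL divergence by the chain rule: it decomposes as a sum over rounds $s\in(t_{j-1},t]$ of the expected per-round KL between the reward distribution of the pulled arm $A_s$ under the two instances. The two instances differ only on arm $1$ and only on the $j$-th epoch, where $|\mu_{j,\mathrm{b}}(x)-\mu_{j,\mathrm{r}}(x)|=O(\delta^\beta)$ by construction (both curves have sup-norm $\Theta(\delta^\beta)$ on this epoch). For $\{\pm1\}$-valued rewards with means differing by $\Delta=O(\delta^\beta)$, the per-round KL is $O(\Delta^2)=O(\delta^{2\beta})$. Summing over the at most $6\delta T$ rounds of the epoch gives $\mathrm{KL}(P_{\mathrm{b}},P_{\mathrm{r}})=O(\delta\,T\cdot\delta^{2\beta})=O(\delta^{2\beta+1}T)$. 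Plugging in $\delta=\big(2^{2(\beta+1)}C_\beta^2 T\big)^{-1/(2\beta+1)}$ makes $\delta^{2\beta+1}T$ a constant; tracking the constants in Definition~\ref{def:bowl} (the $\tfrac12 C_\beta(2\delta)^\beta$ scaling, the bound on $g_{2\delta}$ from Proposition~\ref{prop:pyramid}(iii), and the worst-case variance of a $\{\pm1\}$ reward) should yield exactly $\mathrm{KL}\le\ln 2$, which is why the specific constant in $\delta$ was chosen.

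The main obstacle I expect is purely bookkeeping: getting the constant small enough. A naive per-round KL bound like $\mathrm{KL}(\mathrm{Bern})\le \Delta^2/(p(1-p))$ blows up when the arm's mean is near $\pm1$, so I would instead use the uniform bound $\mathrm{KL}\big(\tfrac{1+a}2\,\|\,\tfrac{1+a'}2\big)\le \tfrac{(a-a')^2}{1-\max(|a|,|a'|)}$ carefully — or, since the rewards here can be taken supported on $\{\pm1\}$ with means of order $\delta^\beta$ (close to $0$, not to $\pm1$), the clean bound $\mathrm{KL}\le 2\Delta^2$ applies and avoids the issue entirely. A secondary subtlety is making precise that "identical on the first $j-1$ epochs" implies the pre-epoch histories are equal in distribution for \emph{any} (possibly randomized, adaptive) policy; this follows from writing both observation processes on a common probability space driven by the policy's internal randomness and the exogenous reward noise, and noting the two processes agree path-by-path until round $t_{j-1}$. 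With those two points handled, the KL chain-rule computation and the Bretagnolle–Huber step are routine.
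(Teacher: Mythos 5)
Your overall strategy coincides with the paper's: compare the laws of the observation sequence under $\mu_{u\oplus\mathrm{r}}$ and $\mu_{u\oplus\mathrm{b}}$, note that the per-round KL contributions vanish before $t_{j-1}$ because the two instances agree on the first $j-1$ epochs, bound each per-round KL inside epoch $j$ by $O(\delta^{2\beta})$ via a Bernoulli KL estimate (the paper's Lemma~\ref{lem:RC}), sum over the at most $6\delta T$ rounds of the epoch by the chain rule, and let the choice of $\delta$ in Definition~\ref{def:bowl} make the total KL an absolute constant. The two side issues you flag (degeneracy of the Bernoulli KL near means $\pm1$, and the identification of the pre-epoch histories for adaptive randomized policies) are non-issues here for exactly the reasons you give: the means are $O(\delta^\beta)$, far from $\pm1$, and the chain rule kills the first $t_{j-1}$ terms.

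The one genuine gap is your final testing inequality. You commit to Bretagnolle--Huber with target $\mathrm{KL}\le\ln 2$, but $\frac12\exp(-\ln 2)=\frac14$, so that route proves only $\ho{P}_{u\oplus\mathrm{b}}[A_t=1]+\ho{P}_{u\oplus\mathrm{r}}[A_t=0]\ge\frac14$, not the claimed $\frac12$; indeed Bretagnolle--Huber cannot yield a bound above $\frac12$ for any strictly positive KL, so no threshold on the KL rescues this route. The fix is the alternative you mention only parenthetically, which is what the paper does: Pinsker gives $\lb|\ho{P}_{\mathrm{r}}[E_t]-\ho{P}_{\mathrm{b}}[E_t]\rb|\le\sqrt{\mathrm{KL}/2}$, hence the sum is at least $1-\sqrt{\mathrm{KL}/2}\ge\frac12$ provided $\mathrm{KL}\le\frac12$. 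Tracking the constants as in the paper --- per-round KL at most $\frac43\lb(\frac12 C_\beta(2\delta)^\beta\rb)^2$, at most $6\delta T$ rounds, and $\delta^{2\beta+1}=\lb(2^{2(\beta+1)}C_\beta^2T\rb)^{-1}$ --- gives exactly $\mathrm{KL}\le\frac12$, so the Pinsker route closes the argument with the stated constant. With that substitution your proof is the paper's proof.
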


To show this, recall that the red and blue curves differ by $O(\delta^\beta)$, and that an epoch comprises only $\delta T$ rounds. 
Thus, any policy ``collects'' $ O(\delta^\beta)^2\cdot \delta T = O(1)$ Kullback–Leibler (KL) divergence within an epoch. 
Therefore, by Pinsker's \ineq, the red and blue curves cannot be distinguished, in the sense that the \prb\ of any event differs by at most $1/2$ under (the probability measure induced by) these two instances.
The desired \ineq\ follows by applying this to the event that $A_t=1$. 
We defer the details to Appendix \ref{apdx:non-optimal}.

Since there are $\delta T$ rounds in an epoch, the expected regret on an epoch is $\Omg(\delta^\beta) \cdot 6\delta T = \Omg(\delta^{\beta+1} T)$.
Summing over all $1/\delta$ epochs, the total regret is $\delta^{-1} \cdot \Omg(\delta^{\beta+1} T) = \Omg(\delta^\beta T) =\Omg(T^{(\beta+1)/(2\beta+1)})$.

\section{Upper Bounds for the One-armed Setting}\label{sec:ub}
In this section, we illustrate the key ideas behind our analysis in the {\em one-armed bandits} setting and generalize the results to the multi-armed setting in Section \ref{sec:k_arm}.
The mean rewards of arm $0$, the {\em static arm}, \sats\ $r_0(t)= 0$. 
Furthermore, there exists an unknown function $\mu_1\in \Sigma(\beta, L)$ such that the mean rewards of arm $1$, the {\em changing} arm, \sats\ $r_1(t)=\mu_1(t/T)$ for all $t\in [T]$.
The key is to determine the {\em sign} of the mean reward of the changing arm.
We will suppress the subscripts and write $\mu(t)=\mu_1(t)$ and $r(t)=r_1(t)$.

A natural policy is to estimate the rate of change in the reward rate and make decisions based on the predicted trend. 
Surprisingly, our policy, which achieves minimax optimal regret, does {\bf not} use any derivative information.
Instead, this policy initiates a ``restart'' at appropriate intervals, disregarding all previous observations.
We will provide a detailed description of this policy next.

\subsection{The Budgeted Exploration Policy}
As a rule of thumb in the non-stationary bandits, a reasonable policy should strike a balance between two trade-offs.  
The first is {\em exploration vs. exploitation}, i.e., balancing between information acquisition versus capitalizing on existing information.
The second is {\em remembering vs. forgetting}, i.e., discarding old information at a judicious rate to account for non-stationarity.

To manage the second trade-off, we propose a {\it Budgeted Exploration (BE)} policy that ``restarts'' periodically, as detailed in Algorithm \ref{alg:BE}.
The policy is specified by two parameters: the {\it exploration budget} $B \ge 1$ and the epoch size $\Delta \in (0,1)$. 
It partitions the normalized timescale $[0,1]$ into {\it epochs} $[x_i, x_{i+1}]$ where $x_i = i\Delta$ for each $i=0,\dots,\Delta^{-1}-1$. 
Equivalently, it partitions $\{1,\dots, T\}$ into epochs $[t_i, t_{i+1}]$ where $t_i = x_i T$. 
(For simplicity, we assume that $\Delta^{-1}$ is an integer. Apparently, this is not essential for the asymptotic results.)
In each epoch, the policy selects the changing arm from the start of the epoch until either (i) the epoch ends or (ii) the budget $B$ runs out, whereupon the policy selects the static arm in all remaining rounds in the epoch. 

Unlike the Rexp3 algorithm in \citealt{besbes2014stochastic}, which uses the EXP3 algorithm as a subroutine to handle exploration-exploitation, we employ a much simpler approach. 
We continue to select arm $1$ until its cumulative reward in this epoch is lower than $-B$.
This ensures that the average regret on each epoch is not too large.
This idea can also be generalized to the {\em multi}-armed setting by imposing a budget on the {\em difference} in the rewards of the empirically best arm. 

\begin{algorithm}[t!]
\begin{algorithmic}[1]
\For{$i=1,\dots,\Delta^{-1}$} 
\State $Z_{\rm total}\lar 0$  \Comment{Keep track of the total reward}
\State $A\lar 1$  \Comment{Which arm to play}
\For{$t=1,\dots,\Delta T$}
\State Select arm $A$ and observe reward $Z$
\State $Z_{\rm total}\rar Z_{\rm total} + Z$
\If{$Z_{\rm total} < -B$} $A\lar 0$ \Comment{Budget runs out}
\EndIf
\EndFor
\EndFor
\caption{Budgeted Exploration Policy $\mathrm{BE}(B,\Delta)$}
\label{alg:BE}
\end{algorithmic}
\end{algorithm}

We will soon see that for suitable $B$ and $\Delta$, the BE policy achieves minimax optimal regret for $\beta=1,2$.
We outline the ideas in the analysis in the next two subsections.

\subsection{Non-smooth Case: $\beta=1$}
\label{sec:pf_beta=1_k=1} Before delving deep into our focus, the $\beta=2$ case, we analyze the $\beta=1$ case as a warm-up. 

\begin{proposition}[Generic Upper Bound, $\beta=1,k=1$]\label{prop:ub_beta=1}
\Sps\ $6\Delta T\log T \le B^2$. Then \[\mathrm{Reg}\lb(\mathrm{BE}(\Delta, B),\Sigma_1(1,L)\rb) \leq \Delta^{-1} (1 + L\Delta^2 T + B).\]
\end{proposition}

By selecting $\Delta = L^{-2/3} T^{-1/3} \log^{1/3} T \text{ and } B=L^{-1/3} T^{1/3} \log^{2/3} T,$ we immediately obtain the following upper bound on the minimax regret for the $1$-H\"older family.
This result, combined with Theorem \ref{thm:lb}, characterizes the minimax regret for the $\beta=1$ case (up to a logarithmic factor in $T$). 

\begin{theorem}[Upper Bound, $\beta =1,k=1$]\label{thm:be_beta1}
For any $L>0$, there exists some $B=\tilde O(T^{1/3})$ and $\Delta=\tilde O(T^{-1/3})$ such that
\[\mathrm{Reg}\lb(\mathrm{BE}(B,\Delta), \Sigma_1 \lb({1,L}\rb)\rb) = O\lb(L^{1/3} T^{2/3}\log^{1/3} T\rb).\]
\end{theorem}

Note that any function in $\Sigma(1,L)$ has total variation $V\le L$ on $[0,1]$. 
Therefore, the above is implied by the $\tilde O(V^{1/3} T^{2/3})$ regret bound in \citealt{besbes2014stochastic}. 
However, we analyze this result to help lay the foundation for the analysis in the $\beta=2$ setting. 

Our analysis proceeds by analyzing the regret on three types of epoch.
Observe that in each epoch, the function $\mu$ either (i) is always positive, (ii) is always negative, or (iii) intersects the time axis, i.e., $\mu(x)=0$ for some $x$.
We refer to these three types of epochs as {\it positive}, {\it negative} and {\it crossing} epochs. We should mention that in a crossing epoch, $\mu$ may not necessarily ``cross'' the time axis but merely ``touches'' it; however, this distinction is not essential to our analysis.

We upper bound the expected regret incurred in each type of epoch as follows: 
\benum
\item First consider a positive epoch.
By concentration bounds, we can show that the budget is unlikely to be depleted, and hence with high probability the policy always chooses the correct arm. 
\item Now consider a negative epoch. 
\Sps\ the budget runs out in the $\tau$-th round after the epoch starts. 
By standard concentration bounds, we can show that with a high probability the policy eliminates the correct arm
and therefore the expected regret after time $\tau$ is $0$.
On the other hand, note that the total regret until time $\tau-1$ is at most $B$. 
By boundedness of the mean reward, we conclude that the regret until time $\tau$ is $B+O(1)$. 
\item Finally, consider a crossing epoch. 
Since $\mu$ is $L$-Lipschitz, we have $\mu(x) = O(L\Delta)$ on this epoch, where we recall that $\Delta \in [0,1]$ is the length of an epoch (on the {\em normalized} time scale). 
Thus, the total regret on this epoch is $ O(L\Delta)\cdot \Delta T = O(L\Delta^2 T)$. 
\eenum

Combining the analysis in the above three cases, we conclude that the regret on {\em every} epoch is $O(1+B+L\Delta^2 T)$. 
Proposition \ref{prop:ub_beta=1} follows immediately since there are $\Delta^{-1}$ epochs.
We formalize the above analysis in Lemma \ref{lem:no_crossing>0},
Lemma \ref{lem:no_crossing<0} and Lemma \ref{lem:regret_crossing_beta=1} respectively in Appendix \ref{apdx:beta=1}. 

\subsection{An $T^{3/5}$ Upper Bound for $\beta=2$}\label{sec:pf_beta=2_k=1}
Based on the above analysis framework, we next show an $\tilde O(T^{3/5})$ upper bound on the minimax regret for $2$-H\"older instances.
Again, we will show a generic bound first.

\begin{proposition}[Generic Upper Bound, $\beta=2$, $k=1$]
\label{prop:ub_beta=2}
\Sps\ $6\Delta T\log T \le B^2$, then \[\mathrm{Reg}\lb(\mathrm{BE}(\Delta, B),\Sigma_1(2,L)\rb) \leq 2\Delta^{-1} (1+B+L\Delta^3 T).\]
\end{proposition}

By choosing
$\Delta = L^{-2/5} T^{-1/5} \log^{1/5} T \text{ and } B=L^{-1/5}  T^{2/5} \log^{3/5} T$, we obtain the following main result of this section.

\begin{theorem}[Optimal Regret Bound, $\beta =2$]\label{thm:be_beta2}
For any $L>0$, there exist some $B=\tilde O(T^{2/5})$ and $\Delta=\tilde O(T^{-1/5})$ such that
\[\mathrm{Reg}\lb(\mathrm{BE}(B,\Delta), \Sigma_1 \lb({2,L}\rb)\rb) = O\lb(L^{1/5} T^{3/5} \log^{2/5} T\rb).\]
\end{theorem}

Remarkably, this upper bound provides the first {\em separation} between the smooth $(\beta\geq 2)$ and non-smooth $(\beta=1)$ regimes \citep{besbes2014stochastic}. 
Our analysis for $\beta=2$ follows the framework of the $\beta=1$ case. 
The analysis for positive and negative epochs remains valid, since they do {\em not} rely on the smoothness of $\mu$. 
We will refer to these two types of epochs as {\em non-crossing epochs}, as $\mu$ does not cross the $x$-axis.

The main difference lies in the regret analysis for crossing epochs. 
We will leverage the smoothness to argue that the regret on those epochs is low {\em on \avg}.
We distinguish between two types of crossing epoch, depending on whether they contain a zero of $\mu'$.

\begin{definition}[Stationary Points and Epochs]
A point $s\in [0,1]$ is {\it stationary} if $\mu'(s)=0$. 
An epoch $[x_j, x_{j+1}]$
is {\it stationary} if it contains a stationary point, and is {\it non-stationary} \ow.
\end{definition}

The regret analysis on a stationary epoch is relatively easier. 
By Lipschitzness of $\mu'$ and Newton-Leibniz theorem, on such an epoch we have $|\mu'| \le L\Delta$. 
Thus, for small $\Delta\in (0,1)$, the reward function would not vary much. 
In other words, the instance is almost stationary in this epoch and, therefore, easy to handle.

The key step is to bound the regret on {\em non-stationary} crossing epochs.
Imagine an adversary trying to fool the learner.
Knowing that the learner is using the BE policy, the adversary first chooses $\mu_1<0$ at the beginning of an epoch, so that when the exploration budget runs out, the static arm has a higher mean reward. 
The learner will then choose arm $0$ for exploitation.
After the learner stops exploring, the adversary bends $\mu_1$ upward to make it positive.
Consequently, the learner selects the wrong arm in the exploitation phase.

However, we next show that the adversary can not employ this tactic too often, due to the smoothness requirement.
We show that to generate a very high regret on a crossing epoch $i$, the nearest stationary point $x$ must be {\bf proportionally} far away. 
This enables us to {\em amortize} the regret on epoch $i$ to the epochs between $x$ and crossing epoch $i$.

\begin{lemma}[Key Lemma: Regret on Crossing Epochs]\label{lem:regret_crossing_beta=2}
Let $j$ be a crossing epoch, and $j+\ell$ be a stationary epoch ($\ell$ may be negative or $0$). 
Moreover, \sps\ every epoch between them is non-stationary, i.e., epoch $i$ is non-stationary whenever $(j+\ell-i)\cdot (j-i)<0$. 
Then, \[ \ho{E} \lb[\sum_{t=t_j}^{t_{j+1}} R_t\rb] \le 2L \cdot (|\ell|+1)\cdot \Delta^3 T.\]
\end{lemma}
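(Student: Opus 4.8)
## Proof Plan for Lemma~\ref{lem:regret_crossing_beta=2}

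The plan is to leverage the second-order smoothness ($\beta=2$) to obtain a \emph{quadratic}-in-time bound on $|r(t)|$ within the crossing epoch $j$, rather than the linear bound used for $\beta=1$ in Lemma~\ref{lem:regret_crossing_beta=1}. The key geometric observation is this: since $\mu'$ is $L$-Lipschitz (as $\mu\in\Sigma(2,L)$), if we can locate a point where $\mu'$ is small, then $|\mu'|$ stays small on a neighborhood, and integrating this controls $|\mu|$ near the crossing. The stationary epoch $j+\ell$ provides exactly such an anchor: there exists $s^*$ with $\mu'(s^*)=0$, and $s^*$ lies within normalized distance roughly $(|\ell|+1)\Delta$ of epoch $j$. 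By Lipschitzness of $\mu'$, for any $x$ in epoch $j$ we get $|\mu'(x)| = |\mu'(x)-\mu'(s^*)| \le L\cdot(|\ell|+1+1)\Delta \lesssim L(|\ell|+1)\Delta$. Then, since $\mu(\tilde x)=0$ at the crossing point $\tilde x\in[x_j,x_{j+1}]$, integrating the derivative bound over an interval of length $\le\Delta$ gives $|\mu(x)| = |\int_{\tilde x}^x \mu'| \le L(|\ell|+1)\Delta\cdot\Delta = L(|\ell|+1)\Delta^2$ for all $x\in[x_j,x_{j+1}]$.

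With this pointwise bound in hand, the remaining steps mirror the proof of Lemma~\ref{lem:regret_crossing_beta=1} verbatim. Transferring to the original time scale, $|r(t)| \le L(|\ell|+1)\Delta^2$ for $t\in[t_j,t_{j+1}]$, so $\sum_{t=t_j}^{t_{j+1}} |r(t)| \le \Delta T\cdot L(|\ell|+1)\Delta^2 = L(|\ell|+1)\Delta^3 T$. Then bound the epoch regret by
\[
\ho{E}\lb[\sum_{t=t_j}^{t_{j+1}} R_t\rb] \le \sum_{t=t_j}^{t_{j+1}} |r(t)| - \ho{E}\lb[\sum_{t=t_j}^{t_{j+1}} Z_{A_t}^t\rb] \le 2\sum_{t=t_j}^{t_{j+1}} |r(t)| \le 2L(|\ell|+1)\Delta^3 T,
\]
using $|\ho{E}[Z_{A_t}^t]| \le |r(t)|$ exactly as before. (One should check the constant: I expect the $+1$ in $(|\ell|+1)$ and possibly a factor swallowed into the constant handle the "$+1$" epoch-counting slack and the two boundary increments of the Lipschitz chain.)

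The main obstacle is the first step: carefully controlling the distance from epoch $j$ to the stationary point $s^*$ in epoch $j+\ell$, and making the Lipschitz-chaining argument for $\mu'$ rigorous when $\ell$ may be negative or zero. When $\ell=0$, epoch $j$ itself is stationary, so $s^*\in[x_j,x_{j+1}]$ and the distance bound is just $\Delta$; when $|\ell|\ge 1$, the point $s^*$ is at most $(|\ell|+1)\Delta$ away from any point of epoch $j$ (crossing an interval of $|\ell|$ epochs plus the partial epoch $j$ and partial epoch $j+\ell$). The hypothesis that all intervening epochs are non-stationary is not actually needed for the bound itself—it is a bookkeeping device used downstream (when summing over epochs) to ensure each crossing epoch is charged to a distinct nearest stationary epoch, so I would state the lemma's proof cleanly without invoking it and remark that it is used in the aggregation step. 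A minor subtlety to flag: one must confirm that a crossing epoch with no stationary point nearby cannot occur—but since $\mu$ is continuous on the compact $[0,1]$ and we only need \emph{some} stationary epoch to exist when there is a crossing, and in fact if $\mu$ crosses zero it either has bounded derivative throughout (handled) or we can always fall back to the $\beta=1$ bound $2L\Delta^2 T$, which is the $\ell$ "at infinity" case; I would make sure the statement as used in the proof of Proposition~\ref{prop:ub_beta=2} covers the degenerate possibility that no stationary epoch exists at all.
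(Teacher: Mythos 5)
Your proposal is correct and follows essentially the same route as the paper's proof: anchor $\mu'$ at the stationary point in epoch $j+\ell$ and use Lipschitzness of $\mu'$ to get $|\mu'|\le L(|\ell|+1)\Delta$ on epoch $j$, then anchor $\mu$ at the crossing point and apply the mean value theorem (your ``integrate the derivative'') to get $|\mu|\le L(|\ell|+1)\Delta^2$, and finally bound the epoch regret by $2\sum_t|r(t)|$. Your side remarks are also consistent with the paper: the non-stationarity hypothesis on intervening epochs is indeed only used in the downstream aggregation (Proposition~\ref{prop:ub_beta=2}), the no-stationary-point case is handled separately by Lemma~\ref{lem:corner}, and the distance bound is exactly $(|\ell|+1)\Delta$ so no extra slack in the constant is needed.
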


We briefly explain the high-level idea.  
Denote by $\|f\|_\infty$ the supremum of the absolute value of a function $f$ on epoch $j$.
Since $\mu'$ is Lipschitz and vanishes at some point in epoch $j+\ell$, we deduce that $\|\mu'\|_\infty = O(L\cdot \ell\Delta)$.
Moreover, since $\mu$ has a crossing in epoch $j$, we have $\|\mu\|_\infty = O( \|\mu'\|_\infty \cdot \Delta)$. 
Therefore, the total regret is bounded by $\Delta T \cdot \|\mu\|_\infty=O( L\cdot \ell \Delta^3 T)$.

With this lemma, we next derive our main regret bound and defer the details to Appendix \ref{apdx:regret_crossing_beta=2}.

\noindent{\bf Proof Sketch of Proposition \ref{prop:ub_beta=2}.}
We show that the {\em \avg} regret per epoch is $O(L\Delta^3 T)$.
Let us index the stationary epochs as $1\le s_1< \dots< s_n\le \Delta^{-1}$. 
Crucially, we observe that for any $j$, there is at most one crossing epoch between $s_j$ and $s_{j+1}$. 
If it does exist, we denote by $i_{\mathrm{x}}$ the epoch that contains it.
Then, by Lemma \ref{lem:regret_crossing_beta=2}, the regret on this epoch \sats\
\[R[i_{\mathrm{x}}] \le 
2L\cdot (|i_{\mathrm{x}}-s_j|+1)\cdot \Delta^3 T \le 2L\cdot |s_{j+1}-s_j|\cdot \Delta^3 T.\]
Since the other epochs are non-crossing, the regret on each of them is $O(B)$. 
Thus, the total regret on epochs $s_j$ through $s_{j+1}$ can be bounded as
\[\sum_{s_j \le i < s_{j+1}}   R[i]\le (s_{j+1} - s_i) \cdot O(L\Delta^3 T + B).\]
Equivalently, the average regret on each epoch is $O(L\Delta^3 T + B)$.
\hfill$\square$

As a caveat, the above proof assumes that there is at least one stationary point. 
We finally handle the corner case where $\mu$ has no stationary point. 

\begin{proposition}[Corner Case: No Stationary Point]\label{lem:corner}
\Sps\ $\mu'(x)\neq 0$ for all $x\in [0,1]$. 
Then, for any $B>0$ and $\Delta\in (0,1)$, we have
\[\mathrm{Reg}\lb(\mathrm{BE}(B,\Delta),\mu\rb)\le L\Delta^2 T + (B+1)\Delta^{-1}.\]
\end{proposition}

Proposition \ref{lem:corner} combined with the above bound on the average regret together completes the proof of Proposition \ref{prop:ub_beta=2}.
We defer the detailed proof to Section \ref{apdx:corner_case}.

\section{Multi-Armed Setting}\label{sec:k_arm}
In this section, we consider the setting with $k\ge 2$ arms.
We modify the BE policy as follows: In each epoch, we alternate between the arms and eliminate an arm when its cumulative reward in this epoch is lower than that of another arm by (at least) $B$, the budget. 
If there is only one arm remaining, then we select it until the end of the epoch.
We formally state this policy in \Alg~\ref{alg:BE_2arm}. 

\begin{algorithm}[t!]
\begin{algorithmic}[1]
\State $t\lar 0$ \Comment{Initialize time}
\For{$i=1,\dots,\Delta^{-1}$} \Comment{Epochs}
\State For each $a\in [k]$, set $Z_a^{\rm total} \lar 0$ \Comment{{\bf Reset} cumulative rewards}
\State $A\lar [k]$ \Comment{{\bf Reset} alive arms}

\While{$|A|\ge 2$ and $t+|A| \leq  (i+1)\Delta T$}
\For{$a\in A$} \Comment{Round robin selection of the alive arms}
\State Select $a$ and observe reward $Z_a^t$
\State $Z_a^{\rm total}\lar Z_a^{\rm total} + Z_a^t$; \quad $t\lar t+1$ \Comment{Update cumulative reward}
\EndFor
\State $Z_{\rm max}^{\rm total} \lar \max_{a\in A} Z_a^{\rm total}$\Comment{The largest cumulative reward}

\For{$a\in A$}
\If{$Z_a^{\rm total} < Z_{\rm max}^{\rm total} - B$} \Comment{Arm $a$ has run out of budget} \State $A\lar A\bs \{a\}$ \Comment{Remove $a$}
\EndIf
\EndFor
\EndWhile

\While{$t< (i+1)\Delta T$} \Comment{Exploit the only alive arm until the epoch ends}
\State Select the only arm in $A$;\quad $t\lar t+1$
\EndWhile

\EndFor
\caption{The Budgeted Exploration Policy, Finite-Armed Case}
\label{alg:BE_2arm}
\end{algorithmic}
\end{algorithm}

The main result of this section is a regret bound that is (i) minimax nearly optimal in $T$ and (ii) sublinear in $k$.
As in the one-armed case, we start with a regret bound for \arb\ $B,\Delta$.

\begin{proposition}[Generic Upper Bound, $\beta=2$, $k\ge 2$]\label{prop:k-arm}
\Sps\ $B^2 \ge k^{-1} \Delta T \log T \log k$. 
Then for any finite $k$, Lipschitz constant $L>0$, we have 
\[{\rm Reg}\lb({\rm BE}(B,\Delta),\ \Sigma_k(2,L)\rb) \le  2\Delta^{-1} \lb(1+ k B +  k^2 L \Delta^3 T\rb).\]
\end{proposition}

It is important to note that, in contrast to the bound outlined in Proposition \ref{prop:ub_beta=2}, we have introduced an additional factor of $k$ before $B$ and $k^2$ before $L\Delta^3 T$. 
By choosing \[\Delta = L^{-2/5} k^{-3/5} T^{-1/5} \cdot\log^{1/5} T \cdot \log^{1/5} k \quad \text{and} \quad B = L^{-1/5} \sqrt{ k^{-1} \Delta T \log T \log k},\]
we immediately obtain the following.

\begin{theorem}[Upper Bound, $\beta=2$, $k\ge 1$]\label{thm:ub_k_arms} For any finite $k\ge 1$, Lipschitz constant $L>0$, there exists some $\Delta = \tilde O(k^{-3/5} T^{-1/5})$ and $B=\tilde O(k^{-1/2}T^{2/5})$ such that
\[{\rm Reg}\lb({\rm BE}(B,\Delta), \Sigma_k(2,L)\rb) = O\lb( k^{4/5} T^{3/5}\cdot \log^{2/5} k\cdot \log^{2/5} T\rb).\]
\end{theorem}
 
Our analysis employs a {\em potential function} argument, a common technique in competitive analysis.
Imagine an adversary who constructs the reward function to fool the learner.
The adversary starts with a certain amount of {\em potential}.
We argue that every time the adversary forges an epoch with high regret, the potential decreases {\em proportionally} to the regret on this epoch. 
Consequently, the adversary has less power in the future.
Therefore, we can bound the regret in terms of the {\em initial} potential.

The critical step in this argument is to find an appropriate definition of potential. 
The pivotal insight is derived from the key lemma in the one-armed case (Lemma \ref{lem:regret_crossing_beta=2}):
The only way for the adversary to generate a $\omg(T^{2/5})$ regret in an epoch is to create a crossing with a high derivative. 
(We write $f(x) = \omg(g(x))$ if $g(x) = o(f(x))$.)

To generalize this idea to the multi-armed setting, we re-define a {\em crossing} as a point where the {\em best} arm changes. 
Furthermore, we define a {\em fast crossing} as a crossing where the derivatives of the two arms involved differ by $\Omg(\Delta)$.

We classify the epochs into three types: (i) non-crossing epochs, in which the best arm never changes;  
(ii) slow-crossing epochs, which contain a crossing but no fast crossings; and (iii) fast-crossing epochs, which contain at least one fast crossing.

The regret analysis for the first two types of epoch is similar to the one-armed setting. 
To analyze the regret in the third type, we define the {\em energy} at a fast crossing as the difference in $\mu'$ between the two arms involved.
Repeating the argument in Lemma \ref{lem:regret_crossing_beta=2} for all {\em ordered} pairs of arms, we show that, for any instance, the total energy is at most $k(k-1)L$. 
By choosing this to be the initial potential, we can bound the total regret on all type-(iii) epochs as $O(k^2 L\Delta^2 T)$. 
We defer the details to Appendix \ref{apdx:k_arm}.

\section{Experiments}\label{sec:expmt}
In this section, we study the practical impact of smoothness in non-stationary bandits via experiments on synthetic data (in Section \ref{sec:synthetic}) and real data (in Section \ref{sec:yahoo}) \resp. 
We evaluate the performance of our BE policy under the theoretical order optimal \pmt s for non-smooth setting ($\beta = 1$) and smooth setting ($\beta =2$) \resp, given in Theorem \ref{thm:be_beta1} and Theorem \ref{thm:be_beta2}, and denoted by BE-NS and BE-S.
As a natural benchmark, we also implemented the Rexp3 policy in \citealt{besbes2014stochastic}. 
We find that the BE-S policy, the only policy in the experiments that leverages smoothness, consistently outperforms the other two benchmarks.

\subsection{Additional Benchmark: Rexp3 policy}
Apart from the BE-NS policy, we consider another benchmark, Rexp3, proposed by \cite{besbes2014stochastic}. 
To explain this policy, we first provide some background.
In the {\em adversarial bandits} problem, the reward \func s are chosen by an adversary sequentially. 
A natural approach is the {\em multiplicative weights algorithm} (MWA).
In simple terms, it maintains a weight for each arm and randomly draws an arm in each round with a probability proportional to the weight.

However, to update the weights, we need to assume {\em full information} feedback, i.e., we observe the rewards of all arms regardless of which arm is selected. 
To operate under {\em bandit feedback}, where we only observe the reward of the selected arm, the EXP3 algorithm was introduced.
This policy ``simulates'' MWA using unbiased reward estimators and achieves $\tilde O(\sqrt T)$ expected regret against the best arm in hindsight. 

To incorporate the forgetting-vs-remembering trade-off, \cite{besbes2014stochastic} proposed the ``restart'' version of the EXP3 \alg,  dubbed {\em Rexp3}.
This \alg\ partitions the horizon into epochs (or {\em batches}, as in their work) of length $\Delta_T$, and
ignores all previous observations at the beginning of each epoch.
Moreover, for robustness in the performance, they integrated this policy with the $\eps$-greedy \alg. 
The policy randomly uniformly explores an arm in each round with a fixed \prb\ $\gamma$. 
This policy achieves $\tilde O(V^{1/3}T^{2/3})$ regret where $V$ is the total variation of the mean rewards; see their Theorem 1.

\subsection{Synthetic Experiments}\label{sec:synthetic}

\begin{figure}[t!]
    \centering
    \includegraphics[width=0.9\linewidth]{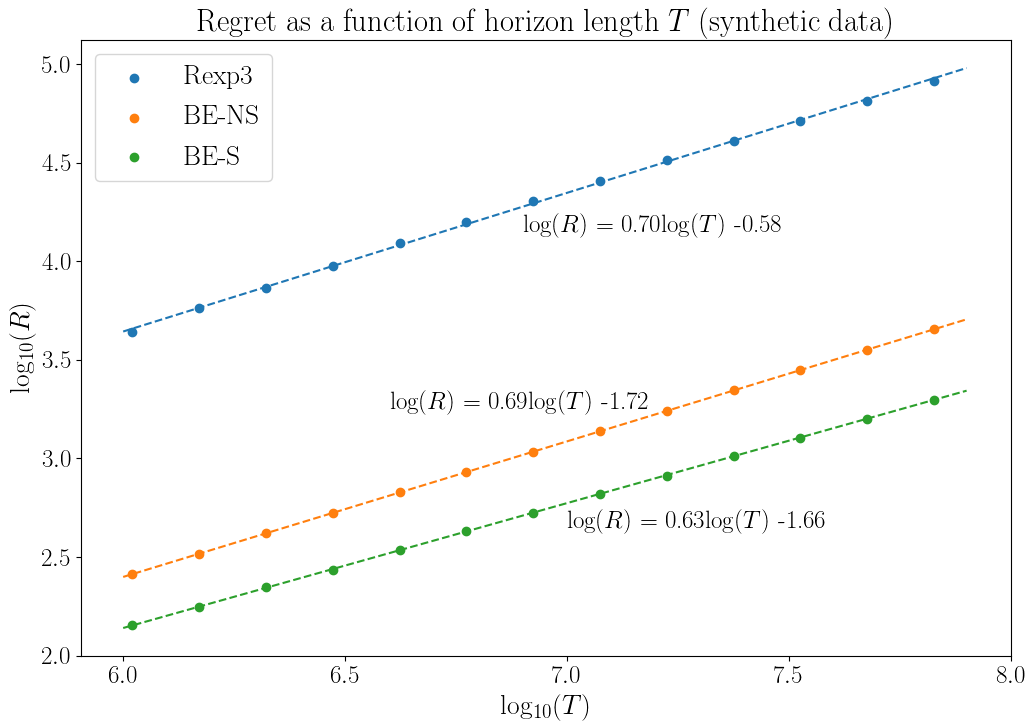}
    \caption{{\bf Log-log regret plot on synthetic data.} To visualize how the regret $R$ of the policies (BE-NS, BE-S and Rexp3) scale in the length $T$ of the time horizon, we present a log-log  (base 10) plot. 
    Each data point represents the regret of a policy, averaged across $100$ randomly generated sinusoidal instances.
    We applied linear regression to the data points corresponding to each policy and obtained three linear curves, whose expressions are provided in the figure. 
    The slopes of these curves align closely with the theoretical values. 
    In particular, the regret of BE-S grows considerably more slowly than the benchmarks.}
    \label{fig:synthetic}
\end{figure}

We first implemented our algorithm with simulations on synthetic data in the one-armed setting. 
We consider random {\it trigonometric} reward functions whose amplitudes, frequencies, and phase shifts are randomly drawn. 
Specifically, in each instance, we have \[r_0(t)=A 
\quad\text{and}\quad r_1(t)=-A \cdot \textrm{sin}\lb(2\pi \nu \frac tT+\phi\rb)+A,\] where $\nu\sim\mathcal{U}_{[2.5,5]}$,  $A\sim\mathcal{N} (0.25\nu^{-2},0.001)$ and $\phi\sim\mathcal{U}_{[0,2\pi]}$.

The meticulous reader may have noticed that $A$ depends on $\nu$.
This choice is actually quite natural. 
In fact, consider $\mu(x) = -A \sin(2\pi \nu x +\phi) + A$.
Note that $\mu''$ has a $\nu^2 A$ term, so by choosing $A$ to scale as $\nu^{-2}$, the \func\ $|\mu''|$ is bounded by an absolute constant, and therefore $\mu$ is 2-H\"older.

We chose the \pmt s for our BE  policy to rate optimal as given in Theorem \ref{thm:be_beta1} and Theorem \ref{thm:be_beta2}.
We also implemented the Rexp3 policy with a total variation budget of $V=0.05$ with order-optimal \pmt s, as given in \citealt{besbes2014stochastic}. Specifically, we chose the epoch size $\Delta_T = \lceil k\log^{1/3}k\cdot (T/V)^{2/3}\rceil$ and the exploration probability $\gamma = \min\left(1, \sqrt{\frac{k\log k}{(e-1)\Delta_T}}\right)$. 

The regret of the policies is visualized using a log-log plot with base 10 in Figure~\ref{fig:synthetic}. 
The time horizon ranges from $T=10^6$ to $T=10^8$.
Theoretically, the {\bf slope} of a log-log curve should be equal to the exponent of the cumulative regret. 
In fact, if the cumulative regret is $c T^d$, then the log-regret is $\log c + d \log T$.
Our simulation shows that under smooth non-stationarity, the $T^{3/5}$-regret policy BE-S outperforms the other two $T^{2/3}$-regret policies (BE-NS and Rexp3). 
Moreover, the log-log curves have slopes $0.69$ and $0.63$, respectively, which are close to their theoretical values.

\subsection{Experiments on Real data}\label{sec:yahoo}
We further investigate the effectiveness of our policy using the R6B subset of {\em Yahoo! Front Page Today Module User Click Log Data Set}  \citep{yahoo_webscope}. 
This data set contains user click log data for news articles (or {\em article}, for simplicity) displayed in the {\em Today Module's} Feature Tab on {\em Yahoo! Front Page}.
Each entry in the data set consists of the timestamp of a visit, the ID of the displayed news article, and a binary indicator of whether a click occurred.
We visualize the click-through rates (CTR) using rolling window \avg\ with window size $1$ hour; see Figure \ref{fig:CTR}.

This data set is well suited for our problem:
Each news article can be viewed as an arm and each click-through can be viewed as a reward.
Specifically, we implemented the policies for $k=20$ advertisements on 2 October, 2011. 

\begin{figure}[htbp]
\centering
    \includegraphics[height=0.35\linewidth]{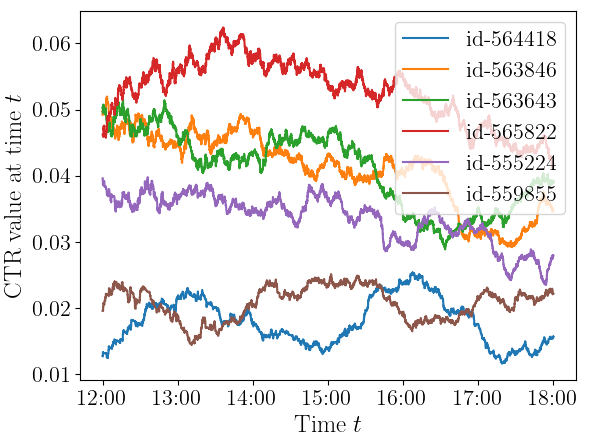}
    \includegraphics[height=0.35\linewidth]{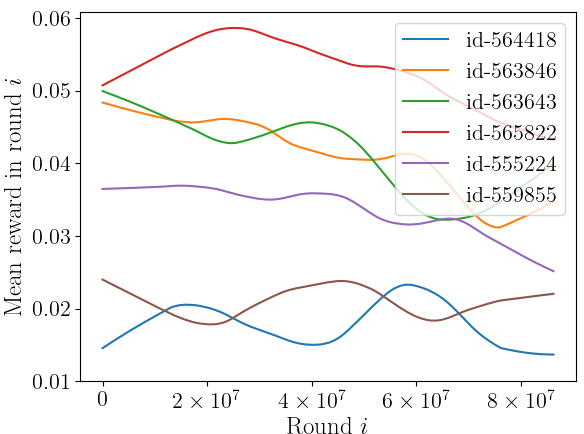}
    \caption{{\bf Modeling non-stationarity in the CTR using {\em Yahoo!} data.} We first employ a rolling window average method on the {\em Yahoo!} user-click data to obtain a {\em non-smooth} function that represents the variations of CTR in time, as illustrated in the left subfigure. 
    In the second part of our experiment, we smooth these functions using local regression, resulting in a mean reward sequence of length $8.64\times 10^7$, where each round corresponds to a second; see the right subfigure.
    }
    \label{fig:CTR}
\end{figure}

\noindent\textbf{Parameters.} 
We choose rate-optimal parameters for every policy in our experiments. 
In particular, we implemented the Rexp3 policy with a total variation budget of $V_T=0.05$, epoch size (or {\em batch size}, as in \citealt{besbes2014stochastic}) $\Delta_T = \lceil k\log^{1/3}k\cdot (T/V_T)^{2/3}\rceil$, and exploration probability $\gamma = \min\left(1, \sqrt{\frac{k\log k}{(e-1)\Delta_T}}\right)$ as proposed in \citealt{besbes2014stochastic}. 
The two BE policies are implemented with parameters specified in Theorem~\ref{thm:be_beta1} and Theorem \ref{thm:be_beta2}:
\begin{itemize}
    \item BE-NS: budget $B_T=O(T^{1/3}\log^{2/3}T)$, epoch size $\Delta_T=O(T^{2/3}\log^{1/3}T)$;
    \item BE-S: budget $B_T=O(T^{3/5}\log^{4/5}T)$, epoch size $\Delta_T=O(T^{4/5}\log^{1/5}T)$.
\end{itemize}

\begin{figure}[htbp]
    \centering
    \includegraphics[width=150mm]{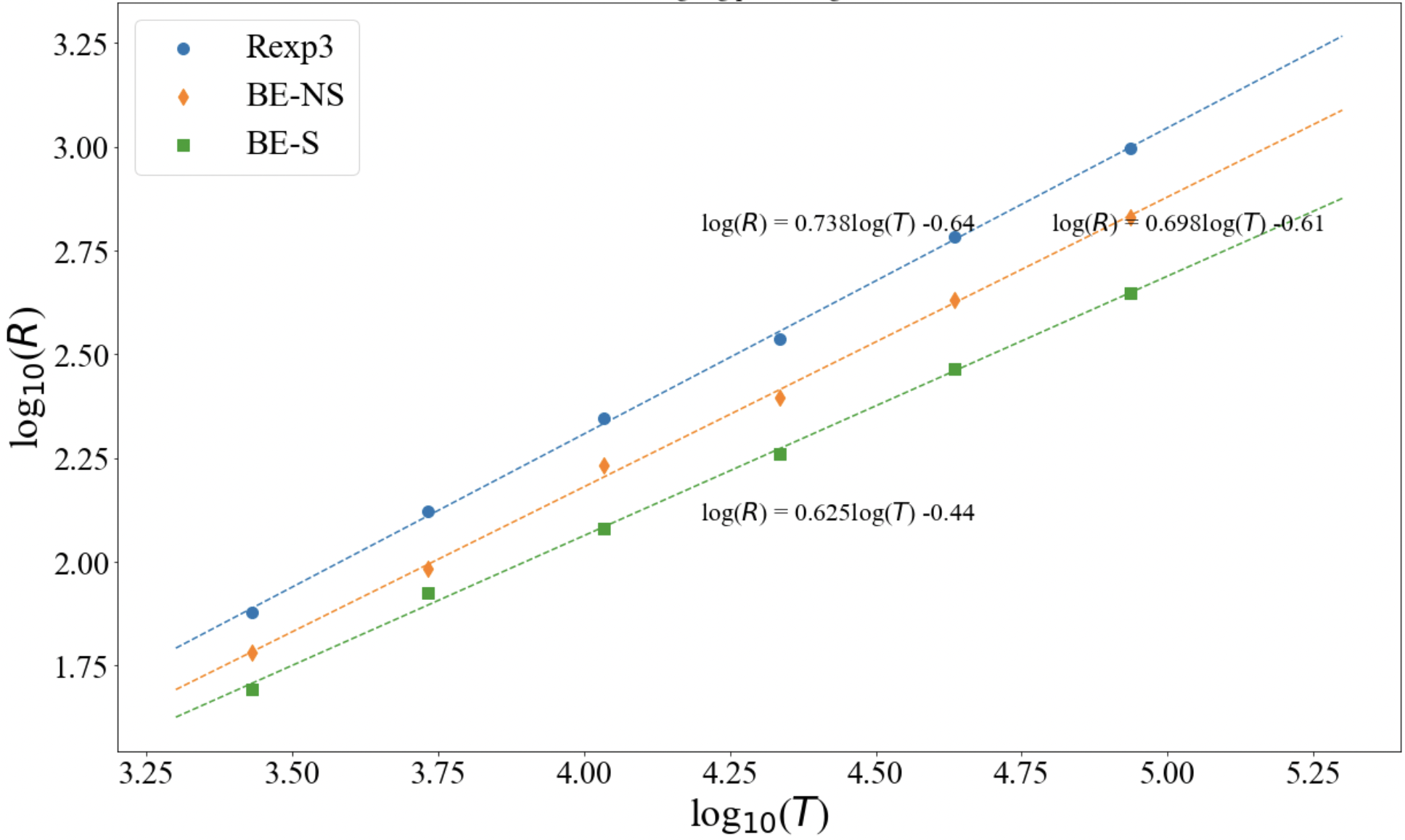}
    \label{fig:xp_counterfactual}
    \caption{Visualization of the experimental results in the counterfactual setting.}
\end{figure}   

\begin{figure}[htbp]
    \centering \includegraphics[width=160mm]{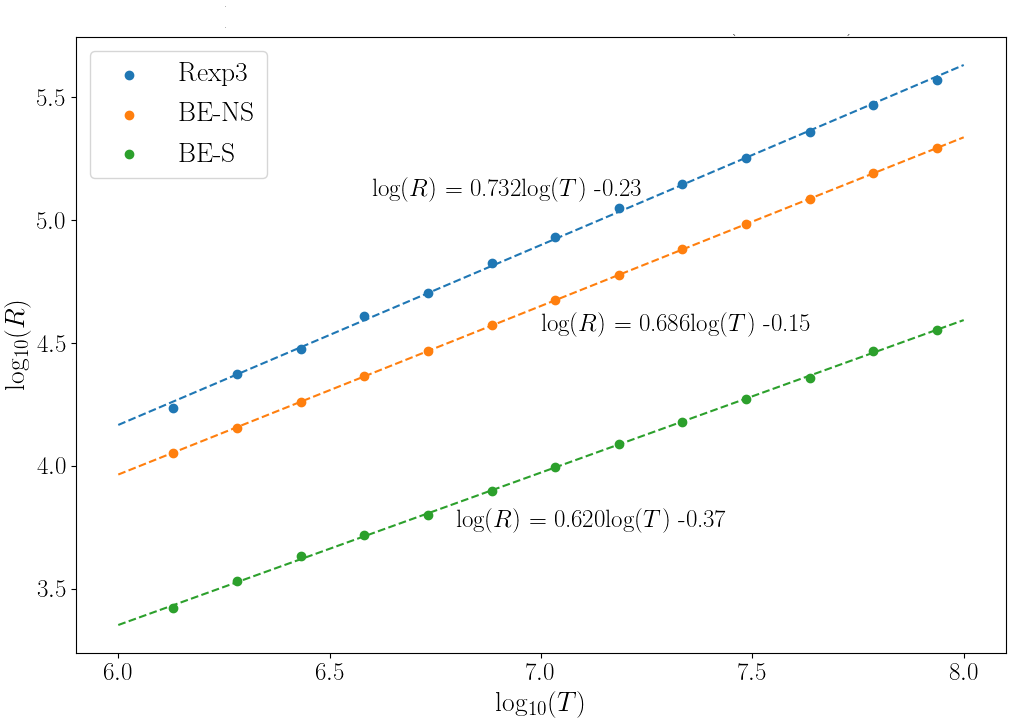}  
    \caption{Visualization of the experimental results for the smoothed reward functions.}
    \label{fig:xp_smoothed}
\end{figure}

\noindent{\bf Counterfactual Experiment.}
We first implemented a counterfactual experiment. 
In this setting, each round corresponds to a second.
In each round $t$, for each ad, we uniformly randomly draw an interaction contained in the rolling window of 1 hour, centered at time $t$ (i.e., 30 minutes before and after $t$), and use its click-through as the reward.
We implemented the three policies with time horizons $T = 24 \cdot 2^{-i}$ hours, where $i=0,1,\dots, 5$. 
In this way, the shortest and longest time horizons involve
$2700$ and $86400$ seconds \resp. 
We visualize their performance in Figure~\ref{fig:xp_counterfactual}.

It should be noted that in this counterfactual experiment, we do not explicitly assume smoothness in the non-stationarity. 
However, the BE-S policy continues to outperform both the Rexp3 and BE-NS policy.
This suggests that despite the zigzag pattern in the average CTR, it is reasonable to assume that the underlying non-stationarity exhibits a smooth behavior.
Furthermore, the log-log curves of the BE-S and BE-NS policies have slopes $0.698$ and $0.625$, which are close to their theoretical values. \\

\noindent{\bf Fitted instance.} We further validate the effectiveness of our policy by fitting a smooth reward function for each ad using local linear regression.
Specifically, we applied the {\em Locally Weighted Scatterplot Smoothing} (LOWESS) method \citep{cleveland1981lowess}. 
For each article, we compute its CTR as a function of time using a rolling window of $1$ hour. 
Specifically, the CTR at a particular time point is the number of user clicks divided by the number of users it has shown to, during an hour before this time point. 
The resulting CTR curves are shown in Figure \ref{fig:CTR}. \\

\noindent\textbf{Smoothed Reward Functions.} We measure the regret of the three policies on time horizons whose lengths range from $T=10^6$ to $T=10^8$; see Figure~\ref{fig:xp_smoothed}.
Consistent with the counterfactual experiment, the BE-S policy, which leverages the inherent smoothness, has lower regret compared to the Rexp3 and BE-NS policies, which do not exploit the smoothness of the non-stationary environment. 
Furthermore, the log-log curves of the BE-S and BE-NS policies have slopes $0.686$ and $0.620$, which are close to their theoretical values.
These findings validate the theoretical effectiveness of our policy.

\section{Conclusions and Future Directions}\label{sec:conclusion}
In this paper, we presented {\it smoothly-varying} non-stationary bandits and demonstrated the first separation between the smooth and non-smooth settings. 
We present a derivative-free policy that achieves $\tilde O(T^{3/5})$  regret in the smooth setting, which is asymptotically lower than the $\tilde O(T^{2/3})$ minimax regret for the non-smooth regime.
Moreover, we show that this upper bound is nearly optimal by establishing an  $\Omg(T^{(\beta+1)/(2\beta+1)})$ lower bound for the minimax regret on the $\beta$-H\"older family for every integer $\beta\ge 1$.
Finally, we conjecture that the lower bounds can be matched for every integer $\beta \ge 3$ but this remains open. If this is true, it means that as smoothness increases, we can obtain regret that approaches the optimal $\tilde O(\sqrt T)$ regret of {\bf stationary} bandits, since $\tilde O(T^{(\beta+1)/(2\beta+1)}) = T^{1/2 + O(1/\beta)}$.

\paragraph{Acknowledgements}
Peter Frazier was supported by AFOSR FA9550-19-1-0283 and FA9550-20-1-0351.
Nathan Kallus is partially supported by the National Science Foundation under Grant No. 1846210.

\bibliography{bandits}
\bibliographystyle{plainnat}

\appendix
\onecolumn
\newpage
\section{Proof of Proposition \ref{prop:pyramid}}
\label{apdx:lb}
In this section we provide details for constructing the family in the lower bound proof.

\subsection{Preliminaries: the Flock Transformation and the Pyramid}
We need the notion of flocks to construct bowl-shaped curves.
Pictorially, the flock transformation of a given function $h$ (called the {\it base} \func) is given by a \xulie\ of copies of $h$ side by side, each weighted by a constant.
For example, the topmost subfigure in Figure \ref{fig:pyramid_flock} is a flock transformation of the  pyramid-shaped base \func. 

\begin{definition}[Flock Transformation]
For any {\it base \func} $h(x):[0,w]\rar \real$ and {\it weight} vector $v\in \real^\ell$, the {\it $v$-flock} is a \func\ $F_v[h]: [0, \ell w] \rar \real$ given by
\[F_v[h](x) = \sum_{i=1}^\ell v_i \cdot h\lb((i-1)w + x\rb).\]
\end{definition}

We now specify the family of functions $g_\eps$ for constructing a bowl-shaped curve. 
We will set the highest derivative, i.e., $g_\eps^{(\beta-1)}$, to be a flock transformation of the pyramid \func\ whose weight vector is chosen from among the following {\it neutralizing} vectors. 

\begin{definition}[Neutralizing Vectors]
Let $\nu^0 = 1$ and $\oplus$ be  vector concatenation. 
For each integer $k\geq 1$, recursively define the $k$-th {\it neutralizing vector}  as $\nu^k = \nu^{k-1} \oplus (-\nu^{k-1})\in \{\pm 1\}^{2^k}$.
A flock \corres\ to a neutralizing vector is called a {\it neutral flock}.
\end{definition}

As the name suggests, these vectors have the property that the sum of their entries on any dyadic interval is $0$. 
Formally, $\sum_{j2^i+1}^{(j+1)2^i} \nu_i^k=0$ for any $\nu=\nu^k$ and integers $i\ge 1$, $j\ge 0$.  

The following result on the transformation of the flock and neutral vectors will be useful for showing Proposition \ref{prop:pyramid}.
The proof of the above is rather \strfwd\ and we leave it to the readers.

\begin{lemma}[Algebra of the Flock Transform]
\label{lem:flock_algebra}
Denote by $\circ$ the composition of mappings.
For any integers $i,j,k\ge 0$, it holds that\\
\rm (i) (Distributive Law)  $\lb(F_{\nu^i}\circ F_{\nu^j}\rb) \circ F_{\nu^k} =  F_{\nu^i} \circ \lb(F_{\nu^j}\circ F_{\nu^k}\rb)$,\\
\rm (ii) (Additive Law)  $F_{\nu^i} \circ F_{\nu^j} = F_{\nu^{i+j}}=F_{\nu^j} \circ F_{\nu^i}$.
\end{lemma}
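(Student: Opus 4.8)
The statement to prove is Lemma \ref{lem:flock_algebra}: the algebra of the flock transform, with (i) the distributive/associative law and (ii) the additive law $F_{\nu^i}\circ F_{\nu^j}=F_{\nu^{i+j}}=F_{\nu^j}\circ F_{\nu^i}$.

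\medskip

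The plan is to work directly from the definition $F_v[h](x)=\sum_{s=1}^{\ell} v_s\, h((s-1)w+x)$ for a base function $h$ on $[0,w]$ with a weight vector $v\in\real^\ell$. First I would observe the basic \emph{tensor-product} structure of the flock transform: for two weight vectors $v\in\real^{\ell_1}$ and $v'\in\real^{\ell_2}$, composing $F_{v'}\circ F_v$ (first spread $h$ by $v$ over a window of length $\ell_1 w$, then spread the result by $v'$) produces exactly $F_{v'\otimes v}[h]$, where $v'\otimes v\in\real^{\ell_1\ell_2}$ is the Kronecker product, indexed so that the $(a-1)\ell_1+b$ entry is $v'_a v_b$. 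This is a one-line unwinding of the nested sums: $F_{v'}[F_v[h]](x)=\sum_a v'_a F_v[h]((a-1)\ell_1 w + x)=\sum_a v'_a \sum_b v_b\, h((a-1)\ell_1 w + (b-1)w + x)=\sum_{a,b} v'_a v_b\, h(((a-1)\ell_1+(b-1))w+x)$. Once this identity is in hand, part (i) is immediate, since Kronecker product is associative: both sides equal $F_{\nu^i\otimes\nu^j\otimes\nu^k}[\cdot]$.

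\medskip

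For part (ii), the only thing left is the purely combinatorial identity on the neutralizing vectors themselves: $\nu^i\otimes\nu^j=\nu^{i+j}=\nu^j\otimes\nu^i$ as elements of $\{\pm1\}^{2^{i+j}}$ (under the chosen indexing convention). I would prove this by induction on $j$ (for fixed $i$), using the recursive definition $\nu^{j}=\nu^{j-1}\oplus(-\nu^{j-1})$ and the bilinearity of $\otimes$ in each slot: $\nu^i\otimes\nu^{j}=\nu^i\otimes(\nu^{j-1}\oplus(-\nu^{j-1}))=(\nu^i\otimes\nu^{j-1})\oplus(-(\nu^i\otimes\nu^{j-1}))$, which by the induction hypothesis equals $\nu^{i+j-1}\oplus(-\nu^{i+j-1})=\nu^{i+j}$; the base case $j=0$ is $\nu^i\otimes\nu^0=\nu^i\otimes 1=\nu^i$. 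Symmetry ($\nu^j\otimes\nu^i=\nu^{i+j}$) then follows by swapping the roles of $i$ and $j$, which also yields the commutativity $F_{\nu^i}\circ F_{\nu^j}=F_{\nu^j}\circ F_{\nu^i}$ stated in (ii). (One should note the harmless subtlety that $v\otimes v'$ and $v'\otimes v$ need not be literally equal as indexed vectors for general $v,v'$, but for the specific $v=\nu^i$, $v'=\nu^j$ both collapse to $\nu^{i+j}$, so no care about index conventions is actually needed in the end.)

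\medskip

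The main obstacle — really the only nontrivial point — is setting up the indexing convention for the Kronecker product so that it is consistent with how $\Phi^\ell$ and the flock widths compose in the surrounding text (in particular with Proposition \ref{prop:neutrality}, which is the real payoff of this lemma). Everything else is bookkeeping on nested finite sums. I would therefore state the tensor-product identity as a short standalone sub-claim with the index convention pinned down explicitly, prove it by direct expansion, and then derive both (i) and (ii) as corollaries of it together with the elementary induction on $\nu^k$.
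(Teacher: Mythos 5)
The paper gives no proof of this lemma at all (it is explicitly ``left to the readers''), so your proposal has to stand on its own. Its core does: the observation that $F_{v'}\circ F_{v}=F_{v'\otimes v}$, with $v'$ the outer/block index, is exactly the right reduction, it follows from the one-line expansion of nested sums you give, and part (i) is then immediate from associativity of $\otimes$.

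The gap is in part (ii). Distributivity of concatenation over $\otimes$ holds only in the \emph{outer} slot: from $v'\otimes v=\bigoplus_a v'_a\cdot v$ one gets $(u'\oplus w')\otimes v=(u'\otimes v)\oplus(w'\otimes v)$, but $v'\otimes(u\oplus w)=\bigoplus_a(v'_au\oplus v'_aw)$ interleaves blocks and is \emph{not} $(v'\otimes u)\oplus(v'\otimes w)$ in general; for instance $(1,2)\otimes(a,-a)=(a,-a,2a,-2a)$ whereas $\bigl((1,2)\otimes(a)\bigr)\oplus\bigl(-(1,2)\otimes(a)\bigr)=(a,2a,-a,-2a)$. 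In $F_{\nu^i}\circ F_{\nu^j}$ the vector $\nu^j$ sits in the inner slot, so your induction step $\nu^i\otimes(\nu^{j-1}\oplus(-\nu^{j-1}))=(\nu^i\otimes\nu^{j-1})\oplus(-(\nu^i\otimes\nu^{j-1}))$ invokes precisely the false inner-slot identity; rewriting $\nu^{j-1}\oplus(-\nu^{j-1})$ as $\nu^1\otimes\nu^{j-1}$, the step amounts to $\nu^i\otimes\nu^1=\nu^1\otimes\nu^i$, a special case of the commutativity you are in the middle of proving. The repair is easy and preserves your architecture: either run the induction on the \emph{outer} index $i$, where $\nu^i=\nu^{i-1}\oplus(-\nu^{i-1})$ distributes legitimately and gives $\nu^i\otimes\nu^j=(\nu^{i-1}\otimes\nu^j)\oplus(-(\nu^{i-1}\otimes\nu^j))=\nu^{i+j-1}\oplus(-\nu^{i+j-1})=\nu^{i+j}$, or first prove $\nu^k=(\nu^1)^{\otimes k}$ by that same outer-slot induction and then get $\nu^i\otimes\nu^j=(\nu^1)^{\otimes(i+j)}=\nu^j\otimes\nu^i$ from associativity alone. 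With that one fix the argument is complete.
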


In our construction, we set the highest derivative of $g_\eps$ to be a neutral flock of pyramid \func s defined as follows. 

\begin{definition}[Pyramid Function]
For any $w>0$, define the {\it $w$-pyramid} \func\ as 
\[\Delta_w(x)= x\cdot\mathbbm{1}_{\lb[0,\frac w2\rb]}(x)+ \lb(\frac w2- x\rb) \cdot \mathbbm{1}_{\lb[\frac w2,w\rb]}(x).\]
\end{definition}

\subsection{Construction via Anti-Derivatives}

\begin{figure}
\centering \includegraphics[width=0.5\linewidth]{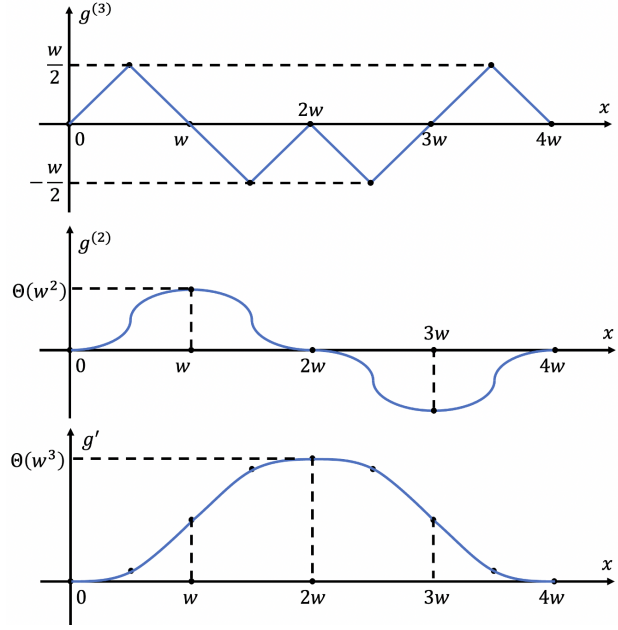}
\caption{Illustration of $g_\eps$ for $\beta = 4$: $g^{(3)}$ is a pyramid flock. The \func\ $g^{(2)}(x)$ is defined as the integration of $g^{(3)}$ from $0$ to $x$. 
Similarly, $g^{(1)}(x)$ is the integration of $g'$ from $0$ to $x$.}
\label{fig:pyramid_flock}
\end{figure}


The above construction can be formalized using anti-derivatives, which we define below.

\begin{definition}[Anti-derivative]
Consider any Lebesgue integrable \func\ $f:\real_{\ge 0} \rar \real$. 
We define $\Phi^0[f] = f$ and for any integer $\ell \geq 1$, the {\it level-$\ell$ anti-derivative}   $\Phi^\ell[f]: \real \rar \real$ is given by 
\[\Phi^\ell[f] (x) = \int_0^{x} \int_0^{x_\ell}\dots \int_0^{x_2} f(x_1)\ dx_1 \dots dx_\ell.\]
\end{definition}

Now consider $g_\eps = \Phi^{\beta-1} \lb[F_{\nu^{\beta -2}}\lb[\Delta_w\rb]\rb]$ 
with $w= w(\eps) = 2^{-(\beta-1)} \eps.$
Note that the $(\beta-1)$-st derivative is $F_{\nu^{\beta-1}}[\Delta_w]$, which is $1$-Lipschitz, so property (iv) holds trivially.
Moreover, observe that $w 2^{\beta-1}=\eps$, and hence  $g_\eps$ is supported on $[0,\eps]$ as desired.

We next formally verify that $\{g_\eps\}$ has the other properties claimed in Proposition \ref{prop:pyramid}.
The next result, which connects the notions of anti-derivatives, flock transformation and neutralizing vectors, says that the anti-derivative of a neutral flock is still a neutral flock.

\begin{proposition}[Anti-derivative Preserves Neutrality]
\label{prop:neutrality}
Let  $h_0:[0,w]\rar \real$ be any base \func\ and let $j\ge 0$ be any integer.
Then, for any $0\le \ell\le k$, there exists some base \func\ $h_\ell:[0,2^\ell w]\rar \real$ such that 
\[\Phi^\ell \lb[F_{\nu^{\ell+j}}\lb[h_0\rb]\rb] = F_{\nu^j}[h_\ell].\]
More precisely, we have $h_\ell = \Phi^\ell[F_{\nu^\ell}[h_0]]$.
\end{proposition}
To see the intuition, consider again $\beta =4$ (readers  can  again refer  to Figure \ref{fig:pyramid_flock}) and consider pyramid base \func\ $h_0 = \Delta_w$.
In this case, the highest derivative is given by a $\nu^2$-flock of pyramids,  i.e., 
\begin{align}
\label{eqn:011723a}
g^{(3)} =F_{\nu^2}[h_0].
\end{align}
Now consider $g^{(2)}$.
On one hand, as illustrated in the middle subfigure in Figure \ref{fig:pyramid_flock}, $g^{(2)}$ is a $\nu^1$-flock under the base \func\ $h_1 = \Phi^1[F_{\nu^1}[\Delta_w]]$, i.e.,
\begin{align}\label{eqn:011723b}
g^{(2)} = F_{\nu^1}[h_1].
\end{align}
\OTOH, by   \eqref{eqn:011723a} we also have \[g^{(2)} =\Phi^1 [g^{(3)}] = \Phi^1 [F_{\nu^2}[h_0]].\]  
Combining with \eqref{eqn:011723b}, we have
\[\Phi^1 [F_{\nu^2}[h_0]] = F_{\nu^1}[h_1],\]
as claimed for $\ell=j=1$.

\noindent{\bf Proof of Proposition \ref{prop:neutrality}.}
Consider induction on $j$.
Wlog assume $w=1$. 
The base case, $j=0$, is trivially true, since $F_{\nu^0}$ is  the identity mapping.
Now consider $j\ge 1$.
As the \ih, \sps\ the claim holds, i.e., 
\begin{align*}  \Phi^\ell \circ F_{\nu^{\ell+i}} &= F_{\nu^i} \circ \Phi^\ell \circ  F_{\nu^\ell},
\end{align*}
for $i=0,\dots, j-1$.
Denote by IH$(i)$ the above identity (which serves as the \ih).
Then, for any base \func\ $h$, it holds  that 
\begin{align*}
\Phi^\ell \circ F_{\nu^{\ell+j}}[h] & = \Phi^\ell \circ F_{\nu^{\ell+j-1}} [F_{\nu^1}[h]]\\
&= F_{\nu^{j-1}} \circ  \Phi^\ell F_{\nu^\ell}[  F_{\nu^1}[h]] &\text{by IH(} j-1)\\
&= F_{\nu^{j-1}} \circ  \Phi^\ell F_{\nu^{\ell+1}} [h] &\text{by Lemma \ref{lem:flock_algebra}}\\
& = F_{\nu^{j-1}}\circ F_{\nu^1} \circ \Phi^\ell \circ  F_{\nu^\ell} [h] &\text{by IH(} 1)\\
&= F_{\nu^j} \circ \Phi^\ell \circ  F_{\nu^\ell} [h], &\text{by Lemma \ref{lem:flock_algebra}}
\end{align*}
and the induction is completed.
\hfill$\square$

We use Proposition \ref{prop:neutrality} to verify properties (i) and (ii) in Proposition \ref{prop:pyramid}.
To verify that the derivatives do vanish at the endpoints, we need the following nice property of the neutralizing vectors.
\begin{lemma}[Symmetric Area Property]\label{lem:sap}
For any base \func\ $h:[0,w]\rar \real$ and integer $k\geq 1$, the $\nu^k$-flock \sats\ $\int_0^{2^k w} F_{\nu^k}[h](x)\ dx=0.$
\end{lemma}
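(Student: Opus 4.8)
The plan is to prove $\int_0^{2^k w} F_{\nu^k}[h](x)\,dx = 0$ by induction on $k$, exploiting the recursive structure $\nu^k = \nu^{k-1} \oplus (-\nu^{k-1})$. The base case $k=1$ amounts to showing that $\int_0^{2w} F_{(1,-1)}[h](x)\,dx = 0$; by definition of the flock transform, $F_{(1,-1)}[h](x) = h(x) - h(w+x)$ on $[0,w]$ appropriately interpreted, or more directly $F_{(1,-1)}[h](x) = 1\cdot h(x)\mathbbm{1}_{[0,w)}(x) + (-1)\cdot h(x-w)\mathbbm{1}_{[w,2w)}(x)$, so $\int_0^{2w} F_{(1,-1)}[h] = \int_0^w h(x)\,dx - \int_0^w h(y)\,dy = 0$ after the substitution $y = x-w$ on the second piece.

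For the inductive step, I would write $F_{\nu^k}[h]$ on $[0, 2^k w]$ as two blocks: on $[0, 2^{k-1}w]$ it agrees with $F_{\nu^{k-1}}[h]$, and on $[2^{k-1}w, 2^k w]$ it agrees with $-F_{\nu^{k-1}}[h](\cdot - 2^{k-1}w)$, which is immediate from $\nu^k = \nu^{k-1}\oplus(-\nu^{k-1})$ and the definition $F_v[h](x) = \sum_i v_i h((i-1)w+x)$. Hence
\[
\int_0^{2^k w} F_{\nu^k}[h](x)\,dx = \int_0^{2^{k-1}w} F_{\nu^{k-1}}[h](x)\,dx - \int_0^{2^{k-1}w} F_{\nu^{k-1}}[h](y)\,dy = 0,
\]
where the second integral comes from substituting $y = x - 2^{k-1}w$. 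Strictly speaking one does not even need the induction hypothesis that each piece integrates to zero — the two pieces simply cancel each other — so the argument is really just the block-cancellation observation; I would still phrase it inductively (or note it directly) for cleanliness. A cleaner alternative is to invoke Lemma \ref{lem:flock_algebra}(ii): $F_{\nu^k} = F_{\nu^1}\circ F_{\nu^{k-1}}$, so $F_{\nu^k}[h] = F_{\nu^1}[H]$ with $H = F_{\nu^{k-1}}[h]$ a base function on $[0, 2^{k-1}w]$, and then apply the $k=1$ case to $H$.

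The only mild obstacle is bookkeeping the indicator-function decomposition of $F_v[h]$ correctly — making sure the $i$-th summand $v_i h((i-1)w + x)$ for $x\in[0,w)$ lines up with the claim that the first $2^{k-1}$ summands reproduce $F_{\nu^{k-1}}[h]$ and the last $2^{k-1}$ reproduce its negated shift. This is purely a matter of reindexing $i \mapsto i - 2^{k-1}$ in the second half and using $(\nu^k)_i = (\nu^{k-1})_{i - 2^{k-1}}\cdot(-1)$ for $i > 2^{k-1}$; no analysis is involved. I expect the whole proof to be three or four lines once the decomposition is stated.
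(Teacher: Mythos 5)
Your proposal is correct and follows essentially the same route as the paper: the paper also proves the lemma by induction on $k$, splitting $\int_0^{2^{k}w}F_{\nu^{k}}[h]$ into the two half-blocks coming from $\nu^{k}=\nu^{k-1}\oplus(-\nu^{k-1})$ and observing that they cancel after a shift of variable (and, as you note, the induction hypothesis is not actually needed for the cancellation). Your remark about the $F_{\nu^1}\circ F_{\nu^{k-1}}$ reformulation is a fine alternative but adds nothing beyond the block-cancellation argument already used.
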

\proof{Proof.}
Induction on $k$. 
For $k=1$ this is obviously true. 
As the \ih, \sps\ this is true for some $k\ge 2$. Recall that by definition it holds that $\nu^k = \nu^{k-1} \oplus (-\nu^{k-1})$, so 
\begin{align*}
& \int_0^{2^{k+1} w} f_{w,\nu^{k+1}}(x)\ dx \\
= & \int_0^{2^k w} f_{w,\nu^k}(x) \ dx  + \int_{2^k w}^{2^{k+1} w} f_{w,-\nu^k}(2^k w + x) \ dx\\
= & \int_0^{2^k w} f_{w,\nu^k}(x) \ dx  - \int_0^{2^k w} f_{w,\nu^k}(x) \ dx\\
=&\ 0.
\end{align*}
\hfill$\square$

\begin{proposition}[Vanishing Derivatives at the Endpoints] 
Let $h: [0,w]\rar \real$ be any base \func\ 
and $H=F_{\nu^\ell}[h]$.
Let $g=\Phi^\ell (H)$.
Then, the \func\ $g:[0,2^\ell w]\rar \real$ is $\ell$-times \conti ly differentiable with
$g^{(j)}(0)=g^{(j)}(2^\ell w)=0$ for any $j=1,\dots,\ell$.
\end{proposition}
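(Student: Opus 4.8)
The plan is to reduce the statement to two facts already in hand: Proposition~\ref{prop:neutrality} (an antiderivative of a neutral flock is again a neutral flock, of a shorter base \func) and Lemma~\ref{lem:sap} (a neutral flock integrates to $0$ over its whole domain). Write $H = F_{\nu^\ell}[h]$ and, for $0\le m\le \ell$, set $g_m := \Phi^m\big[F_{\nu^m}[h]\big]$, so that the \func\ in question is $g = g_\ell$. Since $h$ (in the application, the pyramid $\Delta_w$) is continuous, $H$ is continuous, and repeated use of the fundamental theorem of calculus — each $\Phi^1$ raises differentiability by one order, and $\frac{d}{dx}\Phi^m[f] = \Phi^{m-1}[f]$ for $m\ge 1$ — shows that $g$ is $\ell$-times continuously differentiable with $g^{(j)} = \Phi^{\ell-j}[H]$ for $j=0,1,\dots,\ell$. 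This settles the differentiability claim; it remains to evaluate $g^{(j)}$ at the two endpoints.

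\emph{Left endpoint.} For $1\le j\le \ell-1$ the \func\ $g^{(j)} = \Phi^{\ell-j}[H]$ is an iterated integral whose outermost integral runs from $0$ to the argument, so $g^{(j)}(0) = 0$ automatically. For $j=\ell$ we have $g^{(\ell)} = H = F_{\nu^\ell}[h]$, whose value at $0$ equals (up to sign) $h(0) = 0$ because the base \func\ vanishes at the endpoints of $[0,w]$.

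\emph{Right endpoint (the main step).} Apply Proposition~\ref{prop:neutrality} with the split $\ell = (\ell-j) + j$ to rewrite
\[
g^{(j)} = \Phi^{\ell-j}\big[F_{\nu^{\ell}}[h]\big] = F_{\nu^{j}}[\psi_j], \qquad \psi_j := \Phi^{\ell-j}\big[F_{\nu^{\ell-j}}[h]\big] = g_{\ell-j},
\]
so $g^{(j)}$ is a $\nu^j$-flock of $2^j$ signed copies of the base \func\ $\psi_j$, each supported on a block of length $2^{\ell-j}w$. The point $2^\ell w$ is the right endpoint of the last block, hence $g^{(j)}(2^\ell w) = \pm\,\psi_j(2^{\ell-j}w)$. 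If $1\le j\le \ell-1$ then $\ell-j\ge 1$; peeling off the outermost integral, $\psi_j(2^{\ell-j}w) = \int_0^{2^{\ell-j}w}\Phi^{\ell-j-1}\big[F_{\nu^{\ell-j}}[h]\big](x)\,dx$, and a second application of Proposition~\ref{prop:neutrality} (split $\ell-j = (\ell-j-1)+1$) shows the integrand is a $\nu^1$-flock over $[0,2^{\ell-j}w]$, which integrates to $0$ by Lemma~\ref{lem:sap}; thus $g^{(j)}(2^\ell w) = 0$. For $j=\ell$, $g^{(\ell)}(2^\ell w) = F_{\nu^\ell}[h](2^\ell w)$ equals (up to sign) $h(w) = 0$, again since the base \func\ vanishes at $w$.

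I expect the only real friction to be bookkeeping: at each use of Proposition~\ref{prop:neutrality} one must keep straight which exponent is the ``outer'' antiderivative count and which is the surviving flock order, and one must correctly identify the block of a flock containing the evaluation point $2^\ell w$ together with the local argument there. The analytic ingredients — continuity of $H$, one application of the fundamental theorem of calculus to isolate the outermost integral, and the triviality that $\int_0^0 = 0$ — are routine. It is worth noting explicitly that the two places where $j=\ell$ is handled use that the base \func\ is continuous and vanishes at both endpoints of $[0,w]$, which holds for the pyramid $\Delta_w$ to which this proposition is ultimately applied.
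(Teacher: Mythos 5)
Your proof is correct and rests on the same two ingredients as the paper's: Proposition~\ref{prop:neutrality} to recognize the derivatives of $g$ as neutral flocks, and Lemma~\ref{lem:sap} to kill the relevant integrals. The execution differs in one place. The paper handles the right endpoint in a single line by writing $g^{(i)}(2^\ell w)-g^{(i)}(0)=\int_0^{2^\ell w} g^{(i+1)}$ and observing that $g^{(i+1)}$ is a $\nu^{\ell-i-1}$-flock over the \emph{entire} domain $[0,2^\ell w]$, so the integral vanishes by Lemma~\ref{lem:sap}; you instead localize $g^{(j)}=F_{\nu^j}[\psi_j]$ to its last block, reduce $g^{(j)}(2^\ell w)$ to $\psi_j(2^{\ell-j}w)$, and apply Lemma~\ref{lem:sap} to a $\nu^1$-flock on the shorter interval $[0,2^{\ell-j}w]$. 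Both are valid; the paper's is more economical, while yours needs the extra block-bookkeeping you acknowledge. One substantive point in your favor: the paper's one-liner only reaches $g^{(i)}(2^\ell w)=0$ for $i\le \ell-1$ (it needs $g^{(i+1)}$ to still be a flock of order at least one), whereas you treat $j=\ell$ explicitly and correctly observe that this top-order case is \emph{not} a consequence of neutrality at all -- it requires the base function to vanish at both endpoints of $[0,w]$, which is an extra hypothesis beyond ``any base function'' but is satisfied by the pyramid to which the proposition is applied. Flagging that is a genuine improvement over the paper's stated proof.
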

\proof{Proof.} By Proposition \ref{prop:neutrality}, for any $j$,  there exists base \func\ $h_j:[0,2^\ell w]\rar \real$ such that 
$g^{(\ell-j)} = \Phi^j [H] = F_{\nu^{\ell-j}} [h_j].$
By Lemma \ref{lem:sap}, \ift\   
\[g^{(\ell-j-1)}(2^\ell w) = g^{(\ell-j-1)}(2^\ell w) - g^{(\ell-j-1)}(0)
= \int_0^{2^\ell w} g^{(\ell-j)} = 0.\eqno\]
\hfill$\square$

Proposition \ref{prop:pyramid} then follows immediately since we have verified properties (i)-(iv).

\section{Proof of Lemma \ref{lem:non-optimal}}\label{apdx:non-optimal}
First, we introduce some standard concepts and tools.
For simplicity, for any \ins\ $\mu:[0,1]\rar \real$, let $(Z_\mu^t)_{t\in [T]}$ be the reward vector under this \ins.

\begin{definition}[KL-Divergence]\label{def:KL}
Let $X, Y\in \{\pm 1\}^n$ be random vectors, specified by \prb\ mass \func s $f_X, f_Y: \{\pm 1\}^n\rar [0,1]$.
The {\it Kullback-Leibler divergence} (or KL divergence) is defined as
\[\mathrm{KL}(X,Y) = \sum_{v\in \{\pm 1\}^n} f_X(v) \log \frac{f_X(v)}{f_Y(v)}.\]
\end{definition}

We show that at any time, the KL-divergence of the two \rv s is on the order of the squared difference of their means. 
\begin{lemma}[Bounding the KL-Divergence]\label{lem:RC}
For $i=1,2$, \sps\ \rv\ $Z_i$ takes value on $\{\pm 1\}$ and has mean $r_i$.
Then, when $|r_2|\leq \frac12$, we have 
\[\mathrm{KL}(Z_1, Z_2)\leq \frac43\lb(r_1 - r_2\rb)^2.\]
\end{lemma}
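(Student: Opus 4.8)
The plan is to reduce the statement to a standard bound on the Kullback--Leibler divergence between two Bernoulli-type laws. Since $Z_i$ takes values in $\{\pm 1\}$ with mean $r_i$, we have $\ho{P}[Z_i=1]=\frac{1+r_i}{2}$ and $\ho{P}[Z_i=-1]=\frac{1-r_i}{2}$, so by Definition~\ref{def:KL},
\[
\mathrm{KL}(Z_1,Z_2)=\frac{1+r_1}{2}\log\frac{1+r_1}{1+r_2}+\frac{1-r_1}{2}\log\frac{1-r_1}{1-r_2}.
\]
The hypothesis $|r_2|\le\frac12$ guarantees $1\pm r_2>0$, so every term is well defined (using the convention $0\log 0=0$ in the boundary case $r_1=\pm 1$, where one of the probabilities of $Z_1$ vanishes).

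First I would apply the elementary inequality $\log t\le t-1$, valid for all $t>0$, to each logarithm: taking $t=\frac{1+r_1}{1+r_2}$ gives $\log\frac{1+r_1}{1+r_2}\le\frac{r_1-r_2}{1+r_2}$, and taking $t=\frac{1-r_1}{1-r_2}$ gives $\log\frac{1-r_1}{1-r_2}\le\frac{r_2-r_1}{1-r_2}$. Substituting these and factoring out $\frac{r_1-r_2}{2}$,
\[
\mathrm{KL}(Z_1,Z_2)\le\frac{r_1-r_2}{2}\left(\frac{1+r_1}{1+r_2}-\frac{1-r_1}{1-r_2}\right)=\frac{r_1-r_2}{2}\cdot\frac{2(r_1-r_2)}{1-r_2^2}=\frac{(r_1-r_2)^2}{1-r_2^2},
\]
where the middle equality is the one-line identity obtained by putting the two fractions over the common denominator $1-r_2^2$ (and holds regardless of the sign of $r_1-r_2$).

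Finally I would invoke $|r_2|\le\frac12$, which yields $1-r_2^2\ge\frac34$ and hence $\mathrm{KL}(Z_1,Z_2)\le\frac{(r_1-r_2)^2}{3/4}=\frac43(r_1-r_2)^2$, as claimed. I do not expect any genuine obstacle; the only points needing mild care are checking positivity of the arguments of the logarithms (ensured by $|r_2|\le\frac12$ together with $|r_1|\le 1$) and the sign bookkeeping when combining the two logarithm bounds, which is harmless because the intermediate identity above is exact and the final bound is a square.
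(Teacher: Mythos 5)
Your proof is correct and follows essentially the same route as the paper: both write the KL divergence of the $\{\pm1\}$-valued laws explicitly, bound each logarithm by $\log t\le t-1$, simplify to $\frac{(r_1-r_2)^2}{1-r_2^2}$, and finish with $1-r_2^2\ge\frac34$ from $|r_2|\le\frac12$. Your version is slightly more careful about the boundary cases $r_1=\pm1$ and about the exactness of the intermediate algebraic identity, but the argument is the same.
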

\proof{Proof.}
By definition, we can write $Z_i=2X_i - 1$ where $Z_i\sim \mathrm{Ber}\lb(\frac{r_i + 1}2\rb)$ for $i=1,2$.
Then, we have
\begin{align*}
\mathrm{KL}(Z_1, Z_2) &=   \mathrm{KL}(X_1, X_2)\\
&=\frac{r_1+1}{2}\ln \frac{r_1+1}{r_2+1} + \frac{1-r_1}{2}\ln \frac{1-r_1}{1-r_2} \\
&\leq \frac{r_1+1}{2}\frac{r_1-r_2}{r_2+1} + \frac{1-r_1}{2}\frac{r_2-r_1}{1-r_2} \\
&\leq \frac{(r_1-r_2)^2}{1-r_2^2}.
\end{align*}
Note that $r_2\le \frac 12$, so the above is bounded by $\frac43(r_1-r_2)^2.$ \hfill$\square$

The following says that two \ins s with small KL divergence are hard to distinguish between.
\begin{theorem}[Pinsker's Inequality]
\label{thm:pinsker}
Let $X,Y\in \{\pm 1\}^n$ be two random vectors.
For any event\footnote{In this work, by ``event'' we mean a Borel set.} $E$, we have \[2(\ho{P}[Y\in E] - \ho{P}[X\in E])^2 \le \mathrm{KL}(X,Y).\]
\end{theorem}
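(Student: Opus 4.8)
The plan is to reduce the multivariate statement to a one-dimensional (Bernoulli-versus-Bernoulli) inequality via the data-processing property of KL-divergence, and then dispatch the scalar case by a direct calculus argument. Throughout, $\log$ denotes the natural logarithm, consistent with the convention already used in Lemma~\ref{lem:RC}. Since the claimed inequality involves $(\ho{P}[Y\in E]-\ho{P}[X\in E])^2$, which is symmetric in the two probabilities, I would abbreviate $a=\ho{P}[X\in E]$ and $b=\ho{P}[Y\in E]$ and aim to show $2(a-b)^2\le \mathrm{KL}(X,Y)$.

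\textbf{Step 1 (reduction to two points).} First I would establish the data-processing bound $\mathrm{KL}(X,Y)\ge d(a\|b)$, where $d(a\|b):= a\log\tfrac ab + (1-a)\log\tfrac{1-a}{1-b}$ is the binary relative entropy. This follows from the log-sum inequality applied to the partition $\{v:v\in E\}$ and $\{v:v\notin E\}$ of $\{\pm 1\}^n$: splitting $\mathrm{KL}(X,Y)=\sum_v f_X(v)\log\tfrac{f_X(v)}{f_Y(v)}$ along this partition and lower-bounding each group's contribution by $\lb(\sum f_X\rb)\log\tfrac{\sum f_X}{\sum f_Y}$ yields exactly $d(a\|b)$, because $\sum_{v\in E}f_X(v)=a$, $\sum_{v\in E}f_Y(v)=b$, and similarly $1-a,1-b$ on $E^c$. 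The log-sum inequality itself I would justify in one line from Jensen's inequality applied to the convex map $t\mapsto t\log t$ with weights $f_Y(v)$.

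\textbf{Step 2 (scalar inequality).} It remains to prove $d(a\|b)\ge 2(a-b)^2$ for all $a,b\in[0,1]$. I would fix $a$ and view $g(b):=d(a\|b)-2(a-b)^2$ as a function of $b$. A short computation gives $g(a)=0$ and
\[g'(b)=-\frac ab+\frac{1-a}{1-b}+4(a-b)=\frac{b-a}{b(1-b)}+4(a-b)=(b-a)\lb(\frac{1}{b(1-b)}-4\rb).\]
Since $b(1-b)\le\tfrac14$ on $[0,1]$, the parenthesized factor is nonnegative, so $g'(b)$ has the same sign as $b-a$; hence $g$ decreases to the left of $a$ and increases to the right, giving $g(b)\ge g(a)=0$ everywhere. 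Chaining the two steps yields $2(a-b)^2\le d(a\|b)\le \mathrm{KL}(X,Y)$, which is the claim.

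\textbf{Main obstacle.} There is no deep difficulty, but the delicate point is Step 2: one must organize the derivative computation so that the factor $(b-a)$ pulls out cleanly — the key simplification $-\tfrac ab+\tfrac{1-a}{1-b}=\tfrac{b-a}{b(1-b)}$ is precisely what makes the sign argument work. One should also note the boundary cases $b\in\{0,1\}$, where $d(a\|b)$ may equal $+\infty$ and the inequality is then trivial. Everything else — the log-sum/data-processing reduction of Step 1 — is standard and follows from convexity alone.
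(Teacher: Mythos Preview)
Your argument is correct. The paper does not actually supply a proof of Pinsker's inequality; it simply states the result as a standard tool and moves on to the chain rule (citing \cite{slivkins2019introduction} for the latter). So there is no ``paper's own proof'' to compare against.

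That said, your two-step approach is the textbook one and is cleanly executed. The data-processing reduction via the log-sum inequality is exactly the right move to collapse the $\{\pm 1\}^n$ setting to a Bernoulli pair, and your derivative computation in Step~2 is correct, including the key algebraic simplification $-\tfrac ab+\tfrac{1-a}{1-b}=\tfrac{b-a}{b(1-b)}$ that makes the monotonicity argument go through. Your handling of the boundary cases $b\in\{0,1\}$ is also appropriate. One very minor stylistic point: in Step~1 you might spell out that the log-sum inequality is applied separately to the two blocks $E$ and $E^c$ and the results added, but this is already implicit in what you wrote.
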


The chain rule \kehua s the KL-divergence for random vectors, on which we will later apply Pinsker's \ineq.
The following can be found as Theorem 2.4 (b) in \citealt{slivkins2019introduction}.

\begin{theorem}[Chain Rule for Product Distributions]\label{thm:chain}
\Sps\ $X_1,\dots, X_n, Y_1,\dots, Y_n\in \{\pm 1\}$ are \indep. 
Consider $X=(X_i)$ and $Y=(Y_i)$.
Then, \[\mathrm{KL}(X,Y)=  \sum_{t=1}^n\mathrm{KL}(X_t,Y_t).\]
\end{theorem}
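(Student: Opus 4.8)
The plan is to prove the identity by directly exploiting the fact that the joint laws of $X=(X_1,\dots,X_n)$ and $Y=(Y_1,\dots,Y_n)$ factorize as products of their marginals, since the coordinates are independent. Concretely, writing $f_X$ and $f_Y$ for the two joint \prb\ mass \func s as in Definition \ref{def:KL}, independence gives $f_X(v)=\prod_{t=1}^n f_{X_t}(v_t)$ and $f_Y(v)=\prod_{t=1}^n f_{Y_t}(v_t)$ for every $v=(v_1,\dots,v_n)\in\{\pm 1\}^n$, where $f_{X_t},f_{Y_t}:\{\pm1\}\rar[0,1]$ are the marginal masses of the $t$-th coordinate. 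Everything downstream is a finite computation, so no measure-theoretic machinery is needed.

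First I would substitute these factorizations into the definition of $\mathrm{KL}(X,Y)$ and use $\log\frac{\prod_t f_{X_t}(v_t)}{\prod_t f_{Y_t}(v_t)}=\sum_{t=1}^n \log\frac{f_{X_t}(v_t)}{f_{Y_t}(v_t)}$ to turn the logarithm of a product into a sum of logarithms. After interchanging the two finite sums this yields
\[
\mathrm{KL}(X,Y)=\sum_{t=1}^n \sum_{v\in\{\pm1\}^n}\Big(\prod_{s=1}^n f_{X_s}(v_s)\Big)\log\frac{f_{X_t}(v_t)}{f_{Y_t}(v_t)}.
\]

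The key step is then to marginalize out all coordinates other than $t$ in each inner sum. For fixed $t$, the summand depends on the coordinates $\{v_s:s\neq t\}$ only through the weight $\prod_{s\neq t} f_{X_s}(v_s)$, so summing over those coordinates first and using $\sum_{v_s\in\{\pm1\}} f_{X_s}(v_s)=1$ for each $s\neq t$ collapses the inner sum to $\sum_{v_t\in\{\pm1\}} f_{X_t}(v_t)\log\frac{f_{X_t}(v_t)}{f_{Y_t}(v_t)}=\mathrm{KL}(X_t,Y_t)$. Summing over $t$ gives the claim.

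There is essentially no deep obstacle here; the only points requiring care are bookkeeping ones. The main subtlety is justifying the interchange of the finite sums together with the factorization of the weight in the marginalization step, both of which are routine precisely because all index sets are finite. A secondary point is the convention for terms where $f_{Y_t}(v_t)=0$: one adopts $0\log\frac 00=0$ and $p\log\frac p0=+\infty$ for $p>0$, under which the identity continues to hold (both sides equal $+\infty$ if $Y$ fails to dominate $X$ in some coordinate). Alternatively, the same result follows by induction on $n$: split $X=(X_1,X_{2:n})$, apply the two-block chain rule $\mathrm{KL}(X,Y)=\mathrm{KL}(X_1,Y_1)+\mathrm{KL}(X_{2:n},Y_{2:n})$, which is valid here because of the product structure, and invoke the \ih\ on the tail $X_{2:n}$.
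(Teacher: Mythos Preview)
Your proof is correct. The paper itself does not prove this theorem; it simply cites it as Theorem~2.4~(b) in \cite{slivkins2019introduction}. Your direct argument---factorize the joint masses, split the logarithm, interchange the two finite sums, and marginalize out the coordinates $s\neq t$ using $\sum_{v_s}f_{X_s}(v_s)=1$---is the standard self-contained derivation and is entirely rigorous on the finite state space $\{\pm1\}^n$. The remark on the $0\log\frac00$ and $p\log\frac p0$ conventions is apt, and the inductive alternative you sketch also works.
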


\noindent{\bf Proof of Lemma \ref{lem:non-optimal}.}
For any color $\chi\in \{\mathrm{r},\mathrm{b}\}$, denote by $Z_\chi=(Z_\chi^s)_{s=1}^{t-1}$ the reward vector under \ins\ $u\oplus \chi$.
Consider the event $E_t :=\{A_t = 0\}$. 
By Pinsker's \ineq\ (Theorem~\ref{thm:pinsker}) and the chain rule (Theorem~\ref{thm:chain}),
\begin{align}\label{eqn:010322}
|\ho{P}[Z_\mathrm{r}\in E_t] - \ho{P}[Z_\mathrm{b}\in E_t]|^2 \leq \frac12\mathrm{KL}(Z_\mathrm{r}, Z_\mathrm{b})  = \frac12 \sum_{s=1}^t \mathrm{KL}(Z_\mathrm{r}^s, Z_\mathrm{b}^s)  = 0+ \frac12 \sum_{s=t_{j-1}}^t \mathrm{KL}(Z_\mathrm{r}^s, Z_\mathrm{b}^s).
\end{align}
By the construction of $\mathcal{F}_\beta$ and 
Lemma~\ref{lem:RC}, for $t_{j-1} < s\leq t$ we have 
$\mathrm{KL}(Z_\mathrm{r}^s, Z_\mathrm{b}^s)\leq  \frac43\lb(\frac 12 C_\beta (2\delta)^{\beta}\rb)^2$, and thus
\begin{align}\label{eqn:011522}
\eqref{eqn:010322} \leq \frac 12 \cdot \frac 43 \lb(\frac 12 C_\beta (2\delta)^{\beta}\rb)^2 \cdot (t - t_{j-1}) \leq \frac {2^{2\beta} C_\beta^2}3 \delta^{2\beta} \cdot 6\delta T,
\end{align}
where the last \ineq\ follows since by definition, $t- t_{j-1} \le t_{j} - t_{j-1} = 6\delta T.$
Finally, recall that $\delta=\lb(2^{2(\beta+1)} C_\beta^2 T\rb)^{-\frac 1{2\beta+1}},$
so (\ref{eqn:011522}) gives  
$\Big|\ho{P}[Z_\mathrm{r} \in E_t]-\ho{P}[Z_\mathrm{b} \in E_t] \Big|^2 \leq \frac 14.$
Therefore,
\begin{align*}
\ho{P}[Z_\mathrm{r} \in E_t] + \ho{P}[Z_\mathrm{b} \in \overline{E_t}]  = \ho{P}[Z_\mathrm{r} \in E_t] + 1 - \ho{P}[Z_\mathrm{b} \in E_t] 
 \geq 1 - \big|\ho{P}[Z_\mathrm{r} \in E_t] - \ho{P}[Z_\mathrm{b} \in E_t]\big|  
 \geq \frac 12,
\end{align*}
i.e., \[\ho{P}_{u\oplus {\rm b}}[A_t = 1] + \ho{P}_{u\oplus {\rm r}}[A_t = 0] \ge \frac 12.\eqno\]
\hfill$\square$

\section{Proof of Theorem \ref{thm:lb}: The Lower Bound} 
\label{adpx:main_lb}
By abuse of notation, for any policy $A$ and binary vector $v$, we write $\mathrm{Reg}(A, v) =\mathrm{Reg}(A, \mu_v)$.
Consider the regret on an epoch $j$ under \ins\ $v$,
\[\textstyle\mathrm{Reg}_j(A, v):=\ho{E}\lb[\sum_{t=t_j}^{t_{j+1}} \lb(r^*_v(t) - Z_{A_t}^t \rb)\rb],\]
where $r^*_v(t)= \max\lb\{0, \mu_v\lb(\frac tT\rb)\rb\}$.
Note that $\mathrm{Reg}_j(A, v)$ depends solely on the first $j$ epochs, so we  only need to specify the first $j$ entries of $v$.
Under this notation,
by Lemma~\ref{lem:non-optimal}, for any prefix $u\in \{\mathrm{r},\mathrm{b}\}^{j-1}$, 
\begin{align*} 
\mathrm{Reg}_j(A, u\oplus \mathrm{r}) + \mathrm{Reg}_j(A, u\oplus \mathrm{b})  
& \geq \sum_{t=t_{j-1} +2\delta T}^{t_{j-1} + 4\delta T} \left(\ho{P}_{u \oplus {\rm b}}[A_t = 1] + \ho{P}_{u \oplus {\rm r}}[A_t = 0] \right) \cdot\delta^\beta &\\
&\quad\geq \ 2\delta T \cdot \frac12 \cdot \delta^\beta= \delta^{\beta+1}T.  
\end{align*}
Thus for some color $v_j \in \{\mathrm{r},\mathrm{b}\}$, we have 
$\mathrm{Reg}_j(A, u\oplus v_j) \geq \frac12\delta^{\beta+1}T$.
Note that this \ineq\ holds for any epoch $j$ and prefix $u\in \{\mathrm{r},\mathrm{b}\}^{j-1}$, so we can inductively construct a \xulie\ $v\in \{\mathrm{r},\mathrm{b}\}^m$ with 
$\mathrm{Reg}_j(A, v[j]) \geq \frac 12 \delta^{\beta+1}T$ for each $j\in [m]$ where $v[j]=(v_1,\dots, v_j)$.
Summing over $j\in [m]$, we conclude that
\[\mathrm{Reg}(A, v) = \sum_{j=1}^m \mathrm{Reg}_j(A, v[j])  \geq m\cdot \frac12\delta^{\beta+1}T 
= \frac{1}{12}\delta^{\beta}T.\]
Substituting $\delta=\lb(2^{2(\beta+1)} C_\beta^2 T\rb)^{-\frac 1{2\beta+1}}$, we conclude that 
\[\mathrm{Reg}(A, v)> \frac 1{24} \cdot 2^{-\beta} \lb(C_\beta\rb)^{-\frac {2\beta}{2\beta+1}}\cdot T^{\frac{\beta+1}{2\beta+1}}.
\eqno\]
\hfill$\square$

\section{Preliminaries for the Upper Bound}
Before presenting the proofs of Theorems~\ref{thm:be_beta1} and \ref{thm:be_beta2}, we first state and prove tools used in both proofs.
We will focus on bounding the regret conditional on the following {\it clean event}, which will be shown to occur with high \prb.
Loosely, this is the event that the rewards in all sufficiently large time intervals obey Hoeffding's \ineq.

\begin{definition}[Clean Event]
For any arm $a$ and rounds $t,t'\in [T]$, consider the event
\[\mathcal{C}_a^{t,t'} = \lb\{\sum_{s=t}^{t'} \lb(Z_{a,s} - \mu_a(s)\rb) \leq \sqrt{6\log T \cdot (t'-t)}\rb\}.\]
We define the {\it clean event} as $\mathcal{C} = \bigcap_{a,t,t'} \mathcal{C}_a^{t,t'}$ where  the intersection is over all arms $a$ and all $t,t'$ with
$t'-t\geq 2\log T$.
\end{definition}

To show the event $\cal C$ occurs with high \prb, we use the following basic concentration bound, whose proof can be found in, e.g., \citealt{vershynin2018high}.

\begin{theorem}[Concentration Bounds]
\label{lem:hoeffding}\label{thm:hoeffding}
Let $Z_1,...,Z_m$ be \indep\ \rv s supported on $[-1,1]$ and $Z = \sum_{i=1}^m Z_i$. 
Then for any $\delta>0$, it holds that
\begin{align*}
\ho{P}(|Z - \ho{E}[Z]| > \delta) &\leq \exp\lb(-\frac {\delta^2}{2m}\rb).
\end{align*}
\end{theorem}

\begin{lemma}[Clean Event Occurs w.h.p.]
\label{lem:clean_event}
For any $T$, it holds that $\ho{P}[\overline{\mathcal{C}}]\leq T^{-1}$.
\end{lemma}
\proof{Proof.} By Hoeffding's \ineq\ \citep[Theorem 2.2.6]{vershynin2018high}, 
for any $1\leq t\leq t'\leq T$ with $t'-t\geq 2\log T$, taking $\delta = \sqrt{6 \log T \cdot (t'-t+1)}$, we have 
\begin{align*}
\ho{P}\lb(\overline{\mathcal{C}_a^{t,t'}}\rb) &\leq \exp\lb(-\frac 1{2(t'-t+1)}\cdot 6\log T \cdot (t'-t+1) \rb) = T^{-3}.
\end{align*}
There are at most $T^2$ combinations of $t,t'$, so by the union bound, we have 
\[\ho{P}[\overline{\mathcal{C}}] =\ho{P}\lb[\bigcup_{a,t,t'} \overline{\mathcal{C}_{a}^{t,t'}}\rb] \le \sum_{a,t,t'}  \ho{P}\lb(\overline{\mathcal{C}_a^{t,t'}}\rb) \leq T^{-1}.\eqno\]
\hfill$\square$

\section{Proof of Proposition \ref{prop:ub_beta=1} (the $T^{2/3}$ Upper Bound for $\beta=1$)}\label{apdx:beta=1}
We first consider a positive epoch $i$. 
In this case, the optimal arm is arm $1$, which coincides with the choice of the BE policy before the epoch's stopping rule is triggered.
Thus, there is no regret in this epoch before time $t_i + S_i$.
This can be formally shown by rephrasing Wald's classical identity as follows. 
Recall that $(Z_1^t)$ are the rewards of arm $1$.


\begin{lemma}[Wald's Identity, Rephrased]\label{lem:wald}  For any epoch $i$, we have 
\[\ho{E}\lb[\sum_{t=t_i}^{t_i + S_i} Z_1^t\rb] =\ho{E}\lb[\sum_{t=t_i}^{t_i + S_i }r(t)\rb].\]
\end{lemma}
\proof{Proof.} For simplicity fix $i$ and  write $S=S_i$. Observe that 
\begin{align}
\label{eqn:011022}    
\ho{E}\lb[\sum_{t=t_i}^{t_i + S} Z_1^t \rb] &= \ho{E}\lb[\sum_{t=t_i}^{t_i + S} \lb(Z_1^t -r(t)\rb)\rb] + \ho{E}\lb[\sum_{t=t_i}^{t_i + S}r(t)\rb].
\end{align}
Note that $Z_1^t- r(t)$'s are \indep, mean $0$, and $S$ is a stopping time, so the partial sum $M_s := \sum_{t=1}^s (Z_1^t - r(t))$ is a \mtg\ (w.r.t. the filtration induced by $\{Z_1^s\}$). 
By Wald's stopping time theorem, $\ho{E}[M_S] = 0$. Therefore, \eqref{eqn:011022} becomes $0+ \ho{E}\lb[\sum_{t=t_i}^{t_i + S}r(t)\rb].$ \hfill$\square$

As a result, we only need to bound the regret incurred after $t_i + S_i$. 
We will do this by bounding the \prb\  that the budget is ever run out, i.e., $S_i < \Delta T$, as detailed in the next lemma. 
Recall that $R_t = \max\{0, r(t)\} - Z_{A_t}^t$ is the regret in round $t$.

\begin{lemma}[Regret on Positive Epochs]\label{lem:no_crossing>0}
\Sps\ $\mu(x) > 0$ for $x \in [x_i, x_{i+1}]$.
Then, whenever $B^2 \ge 6 \Delta T \cdot \log T$, the regret of $\mathrm{BE}(B,\Delta)$ on epoch $i$ \sats\ \[\ho{E}\lb[\sum_{t=t_i}^{t_{i+1}} R_t\rb] \le 1.\]
\end{lemma}
\proof{Proof.}
Write $S=S_i$.
Since $\mu >0$, the optimal policy always chooses arm $1$ in this epoch. 
Recall that at time $t_i +S$ the BE \alg\ switches to arm $0$, the sub-optimal arm. 
We can thus simplify the regret as
\begin{align}\label{eqn:011023}
\ho{E}\lb[\sum_{t=t_i}^{t_{i+1}} \lb(r(t) - Z_{A_t}^t\rb)\rb] & =
\ho{E}\lb[\sum_{t=t_i}^{t_i + S} \lb(r(t) - Z_1^t\rb)  + \sum_{t=t_i +S+1}^{t_{i+1}} \lb(r(t) - Z_0^t\rb)\rb]\notag\\
&= 0 + \ho{E}\lb[\sum_{t=t_i +S+1}^{t_{i+1}} \lb(r(t)-Z_0^t\rb)\rb]\notag\\
&=   \ho{E}\lb[\sum_{t=t_i +S+1}^{t_{i+1}}  r(t) \rb],
\end{align}
where the last identity follows
since $Z_0^t$'s are mean $0$ and \indep\ of $S$. 
Further, since $t_{i+1} - t_i = \Delta T$ and $|r| \le 1$, we have
\begin{align*}
\eqref{eqn:011023} &\le \ho{P}[S = \Delta T -1] \cdot  0 + \ho{P}[S < \Delta T -1] \cdot \Delta T = \ho{P}[S < \Delta T - 1] \cdot \Delta T.
\end{align*}

We conclude the proof by bounding $\ho{P}[S < \Delta T-1]$. 
Consider the event $\{S=s\}$ where  $s < \Delta T-1$.
We claim that the event $\{S=s\}$ would not occur \cond al on the clean event $\cal C$.
In fact, if $\{S=s\}$ occurs,  we have $\sum_{t=1}^s Z_1^t< -B$. 
However, conditional on $\cal C$, we have
\[\lb|\sum_{t=t_i}^{t_i + s} (Z_1^{t} - r(t))\rb| \le \sqrt{6s\cdot \log T} < \sqrt{6\Delta T\log T},\]
and more explicitly, 
\[\sum_{t=t_i}^{t_i + s}  Z_1^t > \sum_{t=t_i}^{t_i +s} r(t) -\sqrt{6\Delta T \log T} \ge 0-B,\] 
where the last \ineq\ follows since $6\Delta T \log T\le B^2$.
\IFT\ $\ho{P}[\{S=s\}\cap \mathcal{C}]=0$ for any $s< \Delta T-1$, and hence
\[\ho{P}[S < \Delta T] 
= \ho{P}\lb[\{S < \Delta T\}\cap \overline{\cal{C}}\rb]
\leq \ho{P}\lb[\overline{\cal{C}}\rb] \leq T^{-1}.\] 
Therefore,
\[(\ref{eqn:011023}) \le \ho{P}[S < \Delta T] \cdot \Delta T\leq T^{-1}\cdot \Delta T \le 1.\eqno\]
\hfill$\square$

The above essentially follows from the definition of the clean event. 
When $\cal C$ occurs, the cumulative reward up to the first $s$ rounds in this epoch lies within an interval of width $w(s) \lesssim \sqrt s$ of the mean, which is positive.\footnote{When illustrating high level ideas, we use $A\lesssim B$ to denote $A = \Tilde O(B)$.}
Further, by the \assu\ that $B\gtrsim \sqrt{\Delta T}$ we have $w(s)\le B$ whenever $s\le \Delta T$.

Now we turn to the negative epochs. The result below follows essentially from the definition of the stopping time $S_i$. 
In fact, if the process never stops until the end of epoch, then the cumulative reward is above $-B$. 
If it does stop at some $s<\Delta T$, then the cumulative reward is {\bf just} below $-B$, and hence above $-(B+1)$ since  $|Z_1^t|\le 1$. 

\begin{lemma}[Regret on Negative Epochs]\label{lem:no_crossing<0}
If $\mu(t) < 0$ for  $x\in [x_i, x_{i+1}]$, then the regret on epoch $i$ \sats\  \[\ho{E}\lb[\sum_{t=t_i}^{t_{i+1}} R_t\rb] \leq B+1.\]
\end{lemma}

\proof{Proof.}
Write $S=S_i$. In this case, the optimal arm is arm $0$, so we can simplify the regret as 
\begin{align}\label{eqn:010123}
\ho{E}\lb[\sum_{t=t_i}^{t_{i+1}} (- Z_{A_t}^t) \rb] = -\ho{E}\lb[\sum_{t=t_i}^{t_i+S} Z_1^t\rb]  - \ho{E}\lb[\sum_{t=t_i + S+1}^{t_{i+1}} Z_0^t\rb].
\end{align}
Note that $Z_0^t$ is \indep\ of $S$, so the second expectation is $0$.

To analyze the first term, for any $s\geq 0$ define $X_s:= - \sum_{t=t_i}^{t_i +s} Z_1^t$. 
Then by definition of $S$, on the event $\{S = s\}$ we have $X_{s-1} < B$. 
Since we assumed each the reward \distr\ to be $\{\pm 1\}$-valued, this implies that $X_s < B + 1$.
Therefore, 
\begin{align*}
-\ho{E}\lb[\sum_{t=t_i}^{t_i +S } Z_1^t\rb] &= \ho{E}[X_S] = \sum_{s=1}^\infty \ho{E}\lb[X_s\cdot \mathbbm{1}(X=s)\rb] < \sum_{s=1}^\infty\ho{P}[S=s]\cdot (B+1)= (B+1),
\end{align*}
where the first identity follows from Lebesgue's Dominated Convergence Theorem and the boundedness of $X_s$ for any fixed $s$.
The claimed bound immediately follows by combining the above with \eqref{eqn:010123}.\hfill$\square$

Finally we consider crossing epochs. The following essentially follows from Lipschitzness of $\mu$.

\begin{lemma}[Regret on Crossing Epochs]\label{lem:regret_crossing_beta=1}
Let $j$ be a crossing epoch, i.e.,  $\mu(\tilde x) =0$ for some $\tilde x\in [x_j, x_{j+1}]$.
Then, the regret in this epoch \sats\ \[\ho{E}\lb[\sum_{t=t_j}^{t=t_{j+1}} R_t\rb] \le 2L \Delta^2 T.\]
\end{lemma}
\proof{Proof.}
By Lipschitzness of $\mu$, we have $|\mu(x)| = |\mu(x)-\mu(\tilde x)| \le  L\Delta$  whenever $x_j\le  x\le x_{j+1}$. 
In the original time scale, this means $|r(t)|\le L\Delta$ whenever $t_j \le t\le t_{j+1}$, and hence 
\[\sum_{t=t_j}^{t_{j+1}} |r(t)| \le \Delta T  \cdot L\Delta = L\Delta^2 T.\]
To connect the above with the regret, observe that
\begin{align*}
\ho{E}\lb[\sum_{t=t_j}^{t_{j+1}} R_t\rb]  = \ho{E} \lb[ \sum_{t=t_j}^{t_{j+1}} \lb(\max\{0, r(t)\} - Y_t\rb)\rb]  \leq \sum_{t=t_j}^{t_{j+1}}  \big|r(t)\big|   - \ho{E}\lb[\sum_{t=t_j}^{t_{j+1}} Z_{A_t}^t\rb].
\end{align*}
Moreover, for each $t$ we have $|\ho{E} [Z_{A_t}^t]| \leq |r(t)|$, and therefore  
\[\ho{E} \lb[\sum_{t=t_j}^{t_{j+1}}  R_t\rb] \le 2 \sum_{t=t_j}^{t_{j+1}}  \big|r(t)\big| \le 2L \Delta^2 T. \eqno\]
\hfill$\square$

We are now ready to formally prove to  bound the regret of the BE \alg\ on $1$-H\"older.

\noindent{\bf Proof of Proposition \ref{prop:ub_beta=1}.}
Let $J_+, J_-, J_{\mathrm{x}} \subseteq \{1,\dots,\Delta^{-1}\}$ be the subsets of positive, negative and crossing epochs. 
Note that  $J_+ \cup J_- \cup J_{\mathrm{x}} = 
\{1,\dots,\Delta^{-1}\}$, so the total regret can be decomposed as 
\begin{align}\label{eqn:011323}
\sum_{i=1}^{1/\Delta} R[i] = \sum_{i\in J_-} R[i]  +  \sum_{i\in J_+} R[i]  +  \sum_{i\in J_\mathrm{x}} R[i].
\end{align}
By Lemma \ref{lem:no_crossing>0}, Lemma \ref{lem:no_crossing<0} and Lemma \ref{lem:regret_crossing_beta=1}, whenever $6\Delta T\log T \le B^2$, we have
\[ \eqref{eqn:011323} < |J_+| \cdot 1 +  |J_-| \cdot (B+1) + |J_\mathrm{x}| \cdot 2 L\Delta^2 T \le \Delta^{-1}\cdot (1 + B + L\Delta^2 T).\eqno\]
\hfill$\square$

\section{Proof of Lemma \ref{lem:regret_crossing_beta=2}: Key Lemma for the $\tilde O(T^{3/5})$ Upper Bound} \label{apdx:regret_crossing_beta=2}
Without loss of generality, we assume $\ell \ge 0$ 
As the key observation, we first claim that $|\mu'| = O\lb((\ell+1) \Delta\rb)$ on epoch $j$. 
In fact, since epoch $(j-\ell)$ is stationary, there exists  an  $s\in [x_{j-\ell}, x_{j-\ell+1}]\subseteq [0,1]$ with $\mu'(s) =0$.
Moreover, since $\mu'$ is Lipschitz, for any $x\in [x_j, x_{j+1}]$ we have 
\[|\mu'(x)| = |\mu'(x) - \mu'(s)|  \le L\cdot |x-s|\leq L \cdot (\ell+1) \Delta.\]

We next claim that $|r(t)| = O((\ell+1)\Delta^2)$ on epoch $j$.
In fact, let $\tilde x\in [x_j, x_{j+1}]$ be any crossing, i.e., $\mu(\tilde x)=0$.
Then for any $x\in [x_j, x_{j+1}]$, by the mean value theorem, for some $\zeta$ between  $\tilde x$ and $x$, i.e., $(\zeta-x)\cdot (\zeta - \tilde x) \le 0$, it holds that 
\[|\mu(x)| = |\mu(x)-\mu(\tilde  x)|= |\mu'(\zeta) \cdot (\tilde  x - x)|.\]
By the previous claim, i.e., $|\mu'|\leq L\cdot (\ell+1)\Delta$ on epoch $j$, the above implies that 
\begin{align*}
|\mu(x)| \leq L\cdot (\ell+1) \Delta\cdot |\tilde x -x|  \leq L\cdot (\ell+1)\Delta^2.
\end{align*}
Translating this to  the original time scale, we have $|r(t)| \le L\cdot (\ell+1)\Delta^2$ for any $t\in [t_j, t_{j+1}]$,  and thus the claim holds.

Finally, summing over all rounds from $t_j$ to $t_{j+1}$, we obtain 
\begin{align}
\label{eqn:121522}
\sum_{t=t_j}^{t_{j+1}} |r(t)| \leq L(\ell+1)\Delta^2 \cdot \Delta T.
\end{align}
We use this to bound  the regret.
Note that
\begin{align}\label{eqn:011523b}
\ho{E}\lb[\sum_{t=t_j}^{t_{j+1}} R_t\rb] &= \ho{E} \lb[ \sum_{t=t_j}^{t_{j+1}} \lb(\max\{0, \mu(t)\} - Y_t\rb)\rb]\notag \\
&\leq \sum_{t=t_j}^{t_{j+1}}  \big|\mu(t)\big| - \ho{E}\lb[\sum_{t=t_j}^{t_{j+1}} Y_t\rb].
\end{align}
Observe that for each $t$ we have $|\ho{E} Y_t| \leq |r(t)|$, so the above is bounded by $2 \sum_{t=t_j}^{t_{j+1}}  \big|r(t)\big|$. 
Combining this with \eqref{eqn:121522}, we conclude that 
\[\eqref{eqn:011523b} \leq 2L \cdot (\ell+1)\cdot \Delta^3 T.\eqno\]
\hfill$\square$

\section{Proof of Proposition \ref{lem:corner}: When There Is No Stationary Point}\label{apdx:corner_case} 
Note that in this case $\mu$ has at most one crossing on its domain $[0,1]$. 
As the trivial case, \sps\ there is no crossing, then the upper bound follows immediately by applying Lemma \ref{lem:no_crossing>0} or  Lemma \ref{lem:no_crossing<0} on each epoch.

Now \sps\ there is exactly one crossing, say $\tilde x\in [x_{i_0}, x_{i_0+1}]$ for some integer $i_0$.
Then, for any $x\in [x_{i_0}, x_{i_0+1}]$, by Lipschitzness, we have \[|\mu(x)| = |\mu(x)-\mu(\tilde x)|= |\mu'(\zeta) \cdot (x-\tilde x)|\le L\Delta.\]
Translating this to the original time scale,  we have $|r(t)| \le L \Delta$ whenever $t_{i_0} \le t\le t_{i_0+1}$.
Therefore, 
\begin{align}
\label{eqn:121522a}
\sum_{t=t_{i_0}}^{t_{i_0+1}} |r(t)| \leq L\Delta \cdot (t_{i_0+1}-t_{i_0})
= L\Delta  \cdot \Delta T.
\end{align}

Meanwhile, since any epoch $i\neq i_0$ is either negative or positive, by Lemma \ref{lem:no_crossing>0} and Lemma \ref{lem:no_crossing<0} we have $R[i] \le B+1$.
Combining this with (\ref{eqn:121522a}), the total regret is then bounded as 
\[\sum_{i=1}^{1/\Delta} R[i] = R[i_0] + \sum_{i\neq i_0} R[i]< L\Delta^2 T + \Delta^{-1}\cdot (B+1).\eqno\]
\hfill$\square$

\section{Proof of Proposition \ref{prop:ub_beta=2}: An $\tilde O(T^{3/5})$ Regret for $\beta=2$, Two-armed Case}
To suppress the notation, we will subsequently write  
$R[i] :=\ho{E}\lb[\sum_{t=t_i}^{t_{i+1}} R_t\rb]$ as the regret on epoch $i$. If there is no stationary point, i.e., $\mu'(t)\neq 0$ for all $t\in [0,1]$, then the desired bound follows immediately from  Proposition \ref{lem:corner}.
\Ow, index the stationary epochs as $s_1< \dots< s_n$.
Observe that for any $j$,  there is at most one crossing epoch between $s_j$ and $s_{j+1}$, say $i_{\mathrm{x}}$, if it does exist. 
Then by Lemma \ref{lem:regret_crossing_beta=2}, 
\[R[i_{\mathrm{x}}] \le 
2L\cdot (|i_{\mathrm{x}}-s_j|+1)\cdot \Delta^3 T \le 2L\cdot (|s_{j+1}-s_j|)\cdot \Delta^3 T.\]
Combining the above with Lemma \ref{lem:no_crossing>0} and Lemma \ref{lem:no_crossing<0}, the total expected regret on epochs $s_j$ through $s_{j+1}$ \sats\ 
\[\sum_{s_j \le i < s_{j+1}}   R[i]\le (s_{j+1} - s_i) \cdot (2L \Delta^3 T + B+1).\]
The above clearly also holds when there is no crossing between $s_j$ and $s_{j+1}$. 
In fact, in this case, the $2L\Delta^3 T$ term disappears.
Summing over $j=1,\dots,n$, we have
\begin{align*}
\textstyle\sum_{i=1}^{1/\Delta} R[i] &\textstyle\le \sum_{j=1}^n \sum_{s_j \le i < s_{j+1}}  R[i] \notag\\
&\textstyle\le \sum_{i=1}^n  (s_{j+1} - s_i)\cdot (2L \Delta^3 T +B + 1)\notag\\
&\textstyle\le 2n\cdot (L\Delta^3 T + B),
\end{align*}
and the desired bound follows by noticing that $n\le \frac 1\Delta$.
\hfill$\square$

\section{Proof of Proposition \ref{prop:k-arm}: An $\tilde O(T^{3/5})$ Bound for the Multi-armed Case}\label{apdx:k_arm}
We provide a formal proof of the upper bound for the general finite-armed setting with $\beta=2$. 

\subsection{Preliminaries}\label{sec:prelim_k_arms}
To formalize the ideas outlined in Section \ref{sec:k_arm}, we extend the notion of {\em crossing} by redefining it as points at which the best arm changes.
We will specifically pay attention to crossings where the overtaker's speed is substantially faster than that of the arm being overtaken, formally defined as follows. 

\bdefn[Crossing and Fast Crossing] 
We say that an arm $\alpha$ {\em overtakes} another arm $\beta$, if there exists $\eps_0 >0$ such that for any $\eps \in [-\eps_0,\eps_0]$, we have $\mu_\alpha(x+\eps)> \mu_\beta(x+\eps)$ if and only if $\eps>0$.
For any $x\in [0,1]$, denote by $a^*(x) \in [k]$ the arm with the highest mean reward at $x$ (where we break ties arbitrarily). 
We say that $x$ is a {\em crossing}, if an arm $a$ overtakes $a^*(x)$ at $x$.
Furthermore, $x$ is a {\em fast-crossing} if $\mu'_a(x) - \mu'_{a^*(x)}(x)\ge \Delta.$\edefn

Our regret analysis employs a {\em potential \func\ argument}, which is quite common in competitive analysis for online \alg s. 
The adversary starts with a certain amount of {\em potential}.
We will show in Section \ref{apdx:non_crossing_general_k} that the regret is low on a (i) {\em non-crossing} epoch, i.e., an epochs that contains no crossing, and (ii) {\em stationary crossing} epoch, i.e., an epoch that contains a crossing but not a fast crossing.

Therefore, to generate a high regret on an epoch, the adversary needs to forge a fast crossing.
As the key step in the analysis, we argue that this consumes a high energy (formally defined in Section \ref{apdx:regret_fast_crossing_k_arm}) and results in a large reduction of the potential. Consequently, the adversary has less power in the future.
Subsequently, we will fix and suppress the epoch length $\Delta\in[0,1]$.

For the analysis, we modify the definition of {\it clean event} as follows.

\begin{definition}[Clean Event]
For any arm $a$ and rounds $t,t'\in [T]$, consider the event
\[\mathcal{C}_a^{t,t'} = \lb\{\sum_{s=t}^{t'} \lb(Z_a^s - \mu_a(s)\rb) \leq \sqrt{\frac{6\log k}k \cdot \log T \cdot (t'-t)} \rb\}.\]
We define the {\it clean event} as $\mathcal{C} = \bigcap_{a,t,t'} \mathcal{C}_a^{t,t'}$ where the intersection is over all arms $a$ and pairs of $t,t'$ with $t'> t+ 2\log T$.
\end{definition}

Compared to the one-armed case, in the above definition, we have an extra $\frac{\log k}k$ factor. 
Intuitively, the term $\log k$ is introduced for applying the union bound on the $k$ arms, and the term $1/k$ occurs since each arm is only guaranteed to be selected $t/k$ times after $t$ rounds. 
Next, we show that $\cal C$ occurs with high \prb.

\begin{lemma}[Clean Event Occurs w.h.p.]
For any $k,T\ge 1$, we have $\ho{P}[\overline{\mathcal{C}}]\leq k^{-1} T^{-2}.$
\end{lemma}
\proof{Proof.}
By Hoeffding's \ineq\ \citep[Theorem 2.2.6]{vershynin2018high}, 
for any $1\leq t\leq t'\leq T$ with $t'-t\geq 2\log T$, taking \[\delta = \sqrt{\frac{6\log k}k \log T \cdot (t'-t)},\] we have 
\[\ho{P}\lb(\overline{\mathcal{C}_a^{t,t'}}\rb) \le \exp\lb(-\frac 1{2(t'-t)/k}\cdot 6\log T \cdot \frac{t'-t}k \rb) = k^{-1} T^{-3}.\]
There are at most $T^2$ combinations of $t,t'$, so by the union bound, we conclude that 
\[\ho{P}[\overline{\mathcal{C}}] =\ho{P}\lb[\bigcup_{a,t,t'} \overline{\mathcal{C}_{a}^{t,t'}}\rb] \le \sum_{a,t,t'}  \ho{P}\lb(\overline{\mathcal{C}_a^{t,t'}}\rb) \le k^{-1} T^{-1}.\eqno\]
\hfill$\square$

\subsection{Non-crossing epochs}\label{apdx:non_crossing_general_k}
In this section bound the regret on non-crossing epochs.

\begin{lemma}[Regret on a Non-crossing Epoch]\label{lem:nx_k_arm}
\Sps\ that $R$ the regret of the BE policy ${\rm BE}(B,\Delta)$ on a non-crossing epoch and $B^2 \ge  k^{-1} \Delta T \log T \log k$, then \[\ho{E}[R] \le 1+kB.\]
\end{lemma}
\proof{Proof.} By definition of $C$, the cumulative reward after $s$ rounds (counting from the start of the epoch) deviates from the mean by at most $\sqrt{k^{-1} s \log T \log k}$. 
Since $s\le \Delta T$, this quantity is at most $\sqrt{k^{-1} \Delta T \log T \log k}$.
Thus, if $B\ge \sqrt {k^{-1} \Delta T \log T \log k}$, the BE policy will never eliminate the best arm. 
\IOW, if all but one arm are eliminated at some time, then the future regret in this epoch is $0$. 
On the other hand, regardless of where $C$ occurs, the regret incurred by each arm is bounded by $(B+1)$ by Lemma \ref{lem:no_crossing<0}. 
Therefore, we can bound the regret on this epoch as 
\[\ho{E}[R] \le \ho{E}[R| \bar C] \cdot \ho{P}[\bar C] + \ho{E}[R| C]\cdot \ho{P}[C] \le \Delta T \cdot T^{-3} + (k-1)\cdot(B+1)\le 1 + kB.\eqno\]
\hfill$\square$

\subsection{Regret on Stationary Epochs}
We next bound the regret on stationary epochs. 
Recall that a crossing epoch is stationary if it does not contain a fast crossing. 
We will categorize the arms into two groups: those that have {\em ever} been the best arm at some point during this epoch and the rest. 
We formalize this notion as follows. Recall that $r_a(t) = \mu_a(t/T)$ for all $t\in [T]$, where $\mu_a:[0,1]\rar [-1,1]$ is a $2$-H\"older \func.

\bdefn[Arms That Have Ever Been the Best] For any epoch $[x_i,x_{i+1}]\sse [0,1]$, we define \[A^*_i = \lb\{a\in [k]:\quad \exists x\in [x_i,x_{i+1}] \text{ such that } \mu_a(x) \ge \mu_{a'}(x), \ \forall a'\in [k]\rb\}\]
\edefn

We first show that, on any  stationary epoch $i$, the mean rewards of the arms in $A^*=A^*_i$ do not differ much.
To see this, observe that, by definition, every $a\in A^*$ ``overtakes'' the best arm at some point. 
Moreover, when this happens, the difference in the derivatives differs by at most $L\Delta$. 
Since there can be at most $k$ crossings in an epoch, the derivatives of any two arms can only differ by $kL\Delta$ on this epoch.
We summarize this observation as the following lemma.

\begin{lemma}[Arms in $A^*$ Have Similar Rewards]\label{lem:A^*} 
Let $[x_i,x_{i+1}]$ be a stationary epoch. 
Then for any $a,b\in A^*_i$ and any $x\in [x_i, x_{i+1}]$, we have \[|\mu_a (x)- \mu_b (x)| \le kL\Delta^2.\]
\end{lemma}

We will show that on the clean event, if all but one arm are eliminated at some point, then the remaining arm must be in $A^*$. 
Consequently, the regret in the exploitation phase is low.
Formally, we have the following bound.
For convenience, we switch to the non-normalized time scale, where we recall that $t_i = x_i  \Delta$.

\begin{lemma}[Regret on Stationary Crossing Epochs] \label{lem:stationary_epoch_k_armed}
\Sps\ $B^2 \ge  k^{-1} \Delta T \log T \log k$.
Let $[t_i,t_{i+1}]\sse [0,T]$ be a stationary epoch. Then, 
\[\ho{E}\lb[\sum_{t=t_i}^{t_{i+1}} R_t \rb]\le kB + kL\Delta^3 T.\]
\end{lemma}
\proof{Proof.} Let $S$ be the time when all but one arms are eliminated and write $A^*=A^*_i$.  
First, we observe that on the clean event $C$, any arm in $A^*$ can never be eliminated for ``losing'' to an arm in $(A^*)^c$. 
In fact, this is because the width of the \ci\ scales as $\sqrt t$, and we assumed that $B^2 \ge  k^{-1} \Delta T \log T \log k$.

We use the above to bound the regret after time $S$. 
By Lemma \ref{lem:A^*}, for any $a,b\in A^*$ we have $|\mu_a(x) - \mu_b(x)| \le kL\Delta^2$ for all $x$. 
In \parti, at any time $x$, let $\mu^*(x) = \max_{a\in [k]} \mu_a(x)$, then 
\[|\mu_a(x) - \mu^*(x)|\le kL\Delta^2.\]
Thus, the regret after $S$ can be bounded as 
\[ \sum_{t=S}^{t_{i+1}} R_t \le L\Delta^2 \cdot \Delta T = kL\Delta^3 T.\]

On the other hand, the regret before $S$ is bounded by $kB$. 
Therefore, the total regret on this epoch is bounded as  \[\ho{E} \lb[ \sum_{t=t_i}^{t_{i+1}} R_t\rb] = \ho{E} \lb[ \sum_{t=t_i}^{S} R_t \rb]+  \ho{E} \lb[ \sum_{t=S}^{t_i + \Delta^{-1} T} R_t \rb] \le kB + kL\Delta^3 T.\eqno\]
\hfill$\square$

\subsection{Regret on Fast Crossing Epochs}\label{apdx:regret_fast_crossing_k_arm}
We next show that each epoch has at most one fast crossing for each ordered pair of arms.

\begin{lemma}[At Most One Fast Crossing Per Epoch]\label{lem:at_most_one_fast_crossing}
For any pair $\alpha,\beta$ of arms and any epoch, there is at most one fast crossing where $\alpha$ overtakes $\beta$.
\end{lemma}
\proof{Proof.} 
\Sps\ $\alpha$ overtakes $\beta$ at some $x_0\in [0,1]$ and $\mu_\alpha'(x) - \mu_\beta'(x) \ge L \Delta$. Denote by $g(x): =\mu_\alpha(x_0+x) - \mu_\beta(x_0+x).$
Then, $g(0)=0$ and $g'(0)\ge L \Delta$. 
By the defintion of H\"older class, $g'$ is $L$-Lipschitz, so for any $x>0$ we have \[g'(x) \ge g'(0) - Lx \ge L\Delta - Lx.\]  
Thus, for any $x>0$, it holds that \[g(x) = \int_0^x g'(s)\ ds \ge \int_0^x L\Delta - Ls \ ds = L\Delta x - \frac L2 x^2.\]
Therefore, for any $0\le x \le 2\Delta$, we have 
\[g(x)  = \mu_\alpha(x) -  \mu_\beta(x) \ge 0.\] 
\IOW, the next time $\alpha$ overtakes $\beta$ occurs at least $\Delta/L$ time later. Therefore, there is at most one fast crossing where $\alpha$ overtakes $\beta$.\hfill$\square$

Note that the above is consistent with the definition for $k=1$. We next introduce the notion of energy, which captures the adversary's ability to cause high regret in the neighborhood of a crossing. 

\bdefn[Energy]\label{def:energy_k=2}
\Sps\ $x_1,\dots, x_m$ are all fast crossings, and at $x_i$, arm $\alpha(i)$ overtakes $\beta(i)$. 
Define the {\em energy} of the $i$-th fast crossing by  \[\kappa_i := |\mu'_{\alpha(i)} (x_i) - \mu'_{\beta(i)} (x_i)|.\]
\edefn

We now show that the total potential is bounded by a constant (in $T$).

\begin{lemma}[Conservation of Energy]\label{lem:fast_crossing}
Under the notations in Definition \ref{def:energy_k=2}, we have 
\[ \sum_{i=1}^m \kappa_i \le (k-1)kL.\]
\end{lemma}
\proof{Proof.} For a fixed (ordered) pair of arms $a,b$, denote by  $x_{i_1},\dots,x_{i_\ell}$ the subsequence of fast crossings
where $a$ overtakes $b$. 
For simplicity, we write $x_j = x_{i_j}$.
Observe that between any two crossings $x_i,x_{i+1}$, there exists at least one stationary point, which we denote by $s_i$. 
By Lipschitzness of $\mu'$, we have  
\[\kappa_i = |\mu'(x_i) - \mu'(s_i)|\le L |s_i - x_i|.\] Therefore, 
\[\sum_{i=1}^\ell \kappa_i \le \sum_{i=1}^\ell L|s_i - x_i| \le L,\]
where the last \ineq\ follows since the intervals $\{(x_i,s_i): i\in [\ell]\}$ are disjoint. 
Since there are $k(k-1)$ distinct ordered pairs of arms, the claimed \ineq\ follows.
\hfill$\square$

Finally, we connect the total energy to the regret. Its proof is similar to that of Lemma \ref{lem:regret_crossing_beta=2}, so we omit it here.

\begin{lemma}[Regret is Proportional to the Energy]\label{lem:regret_energy}
Under the notations in Definition \ref{def:energy_k=2}, denote by $R[i]$ the expected regret of the policy ${\rm BE}(B, \Delta)$ on the epoch containing $x_i$. Then,
\[\ho{E} \lb[ \sum_{i=1}^m  R[i] \rb]\le \sum_{i=1}^m \kappa_i \Delta^2 T.\]
\end{lemma}

Again, we emphasize by Lemma \ref{lem:at_most_one_fast_crossing}, each epoch contains only one fast crossing, and thus $R[i]$ and $R[j]$ correspond to distinct epochs if $i\neq j$. 
We now combine all this and prove the main upper bound. 

\subsection{Proof of Proposition \ref{prop:k-arm}}
Denote by $E_{\rm nx}, E_{\rm st}, E_{\rm fx} \sse \{1,\dots,\Delta^{-1}\}$ the subset of non-crossing epochs, stationary crossing epochs and non-stationary crossing epochs (recall that such epochs are exactly crossing epochs with a fast crossing).
From Lemma \ref{lem:nx_k_arm}, the total regret on each in $E_{\rm nx}$ is at most $1+kB$.
By Lemma \ref{lem:stationary_epoch_k_armed}, the regret on each epoch in $E_{\rm st}$ is bounded by $kB + kL \Delta^3 T$. 
Therefore, \[\sum_{j \in E_{\rm st}} R[j] \le |E_{\rm st}| \cdot (kB + kL \Delta^3 T).\]
Combining Lemma \ref{lem:fast_crossing} and Lemma \ref{lem:regret_energy}, we have 
\[\ho{E} \lb[ \sum_{i\in E_{\rm fx}}  R[i] \rb] \le  \sum_{i=1}^m \kappa_i \Delta^2 T < k^2 L \Delta^2 T.\]
Therefore, we conclude that 
\begin{align*}
\ho{E}\lb[\sum_{i=1}^{1/\Delta} R[i]\rb] &= \ho{E}\lb[\sum_{i\in E_{\rm nx}}R[i] + \sum_{i\in E_{\rm st}} R[i] +\sum_{i\in E_{\rm fx}}R[i]\rb] \\
&\le \Delta^{-1} (1+kB) + |E_{\rm st}| \cdot (kB + kL \Delta^3 T) + k^2 L \Delta^2 T\\
&\le 2\Delta^{-1} (1+kB) + 2L k^2 \Delta^2 T,
\end{align*}
where the last \ineq\ follows since $|E_{\rm st}|\le 1/\Delta$.\hfill$\square$
\end{document}